\documentclass[letter, 10pt]{article}
\usepackage[T1]{fontenc}
\usepackage{tgpagella}
\usepackage{graphicx,amsmath,amsfonts,amscd,amssymb,bm,url,color,latexsym,amsthm}
\usepackage{fullpage}
\usepackage[small,bf]{caption}
\usepackage{subcaption}
\usepackage{bbm}
\usepackage{microtype}
\setlength{\captionmargin}{30pt}
\usepackage{algorithm,algorithmicx,algpseudocode} 
\usepackage{hyperref}
\usepackage{verbatim}
\usepackage{framed}
\usepackage{sidecap,wrapfig}
\usepackage[nameinlink]{cleveref}

\allowdisplaybreaks  

\hypersetup{
    colorlinks=true,%
    citecolor=blue,%
    filecolor=blue,%
    linkcolor=blue,%
    urlcolor=blue
}
\usepackage[toc, page]{appendix}

\setcounter{totalnumber}{50}
\setcounter{topnumber}{50}
\setcounter{bottomnumber}{50}
\newtheorem{theorem}{Theorem}[section]
\newtheorem{lemma}[theorem]{Lemma}

\usepackage{tikz}
\usepackage{fge}
\usepackage{pgfplots}

\usepackage{mathdots} 

\renewcommand{\mathbf}{\boldsymbol}

\newcommand{\mb}{\mathbf}
\newcommand{\mc}{\mathcal}

\newcommand{\bb}{\mathbb}
\newcommand{\set}[1]{\left\{ #1 \right\}}

\newcommand{\eps}{\varepsilon}
\newcommand{\R}{\bb R}

\newcommand{ \Brac }[1]{\left\lbrace #1 \right\rbrace}
\newcommand{ \brac }[1]{\left[ #1 \right]}
\newcommand{ \paren }[1]{ \left( #1 \right) }


\DeclareMathOperator{\sign}{sign}
\DeclareMathOperator{\diag}{diag}

\DeclareMathOperator{\st}{s.t.}


\newcommand{\norm}[2]{\left\| #1 \right\|_{#2}}
\newcommand{\abs}[1]{\left| #1 \right|}
\newcommand{\innerprod}[2]{\left\langle #1,  #2 \right\rangle}

\newcommand{\expect}[1]{\bb E\left[ #1 \right]}

\newcommand{\phihat}{\widehat{\varphi}}

\newcommand{\cconv}{\circledast}
\newcommand{\convmtx}[1]{\mb C_{#1}}
\newcommand{\extend}[1]{\widetilde{#1}}
\newcommand{\shift}[2]{s_{#2}\left[#1\right]}
\newcommand{\injector}{\mb \iota}
\newcommand{\soft}[2]{\mathrm{SOFT}_{#2}\left[ #1 \right] }
\newcommand{\revmtx}{\check{\mb C}_{\mb a_0}}
\newcommand{\grad}[2]{\mathrm{grad}\left[#1\right] \left( #2 \right )}
\newcommand{\hess}[2]{\mathrm{Hess}\left[#1\right] \left( #2 \right)}
\newcommand{\relint}[1]{\mathrm{relint}\left( #1 \right)}
\newcommand{\rbdy}[1]{\mathrm{relbdy}\left( #1 \right)}
\newcommand{\cl}[1]{\mathrm{cl}\left( #1 \right)}
\newcommand{\supp}[1]{\mathrm{supp}\left( #1 \right)}
\newcommand{\range}[1]{\mathrm{range}\left( #1 \right)}
\newcommand{\proj}[2]{\mathcal{P}_{#2}\left[ #1 \right]}

\numberwithin{equation}{section}

\def \endprf{\hfill {\vrule height6pt width6pt depth0pt}\medskip}
\renewenvironment{proof}{\noindent {\bf Proof} }{\endprf\par}

\pagestyle{plain}

\begin{document}

\title{On the Global Geometry of\\
Sphere-Constrained Sparse Blind Deconvolution}
\author{Yuqian Zhang$^{1}$, Yenson Lau$^{2}$, Han-Wen Kuo$^{2}$,\\
        Sky Cheung$^{3}$, Abhay Pasupathy$^{3}$, John Wright$^{2,4}$\\ 
\small{ $^1$Department of Computer Science, Cornell University }\\
\small{ $^2$Department of Electrical Engineering and Data Science Institute, Columbia University} \\
\small{ $^3$Department of Physics, Columbia University }\\
\small{ $^4$Department of Applied Physics and Applied Mathematics, Columbia University}
}
\maketitle

\begin{abstract}
Blind deconvolution is the problem of recovering a convolutional kernel $\mb a_0$ and an activation signal $\mb x_0$ from their convolution $\mb y = \mb a_0 \circledast \mb x_0$. This problem is ill-posed without further constraints or priors. This paper studies the situation where the nonzero entries in the activation signal are sparsely and randomly populated. 
We normalize the convolution kernel to have unit Frobenius norm and cast the sparse blind deconvolution problem as a nonconvex optimization problem over the sphere. With this spherical constraint, every spurious local minimum turns out to be close to some signed shift truncation of the ground truth, under certain hypotheses. This benign property motivates an effective two stage algorithm that recovers the ground truth from the partial information offered by a suboptimal local minimum. This geometry-inspired algorithm recovers the ground truth for certain microscopy problems, also exhibits promising performance in the more challenging image deblurring problem. Our insights into the global geometry and the two stage algorithm extend to the convolutional dictionary learning problem, where a superposition of multiple convolution signals is observed.
\end{abstract}

\section{Introduction}
Blind deconvolution aims to recover two unknown signals: a kernel $\mb a_0$ and some underlying signal $\mb x_0$ from their convolution $\mb y = \mb a_0 \circledast \mb x_0$. Blind deconvolution is ill-posed in general: there are infinitely many pairs of signals rendering the same convolution. To render the problem well-posed, one may exploit prior knowledge about the structure of $\mb a_0$ and $\mb x_0$. For example, the underlying signal $\mb x_0$ is {\em sparse} in many engineering and scientific applications:

{\bf \em Microscopy data analysis}: In the crystal lattice of nanoscale materials, there exist randomly and sparsely distributed ``defects'', whose locations and signatures encode crucial information about the electronic structure of the material. Accurate recovery of such information can facilitate investigation of the detailed structure of materials \cite{Cheung17-Nature}.  

{\bf \em Neural spike sorting}: Neurons communicate by firing brief voltage spikes, whose characteristics reflect important features of the neuron. These spikes occur randomly and sparsely in time. Neurophysiologists are interested in assigning stereotyped spikes to putative cells, as well as in knowing their respective spike times \cite{Ekanadham2011-NIPS, Lewicki1998}.

{\bf \em Image deblurring}: Motion blur can be modeled as the convolution of a latent sharp image and a kernel capturing the motion of the camera, usually assumed to be invariant across the image \cite{Fergus2006-ACM}. The inverse process of recovering the original sharp image from a blurry image has been widely studied \cite{Cho2009-SIGGRAPH, Kundur1996-SPM}. Many well-performing approaches leverage the observation that sharp natural images typically have (approximately) sparse gradients \cite{Chan1988-TIP, Levin2011-PAMI, Perrone2014-CVPR}.
\vspace{.05in}

All of these applications lead to instances of the sparse blind deconvolution problem. The dominant algorithmic approach to sparse blind deconvolution involves {\em nonconvex optimization}\footnote{In signal processing, a number of elegant {\em convex} relaxations of the problem have been developed \cite{Ahmed2012-BDconvexprog, Chi2016-TIP, Li16-pp}. However, these approaches typically require stronger prior information (subspace constraint rather than sparsity) or exhibit suboptimal scalings.}.
Nonconvex formulations for deconvolution can be derived via several probabilistic formalisms (ML/MAP, VB, ect.), or simply from heuristics. For example, in image deblurring,  the kernel $\mb a$ can be modeled as residing on a simplex \cite{Levin2011-PAMI, Gong2016-CVPR, Krishnan2011-CVPR, Liu2014-TIP}. This is natural from a modeling prospective\footnote{Since entries of $\mb a$ roughly represent the fraction of the camera exposure time at a given location.}, but problematic for optimization: natural formulations of deconvolution over the simplex admit trivial global minimizers (corresponding to spiky convolutional kernels $\mb a=\mb\delta$) \cite{Perrone2014-CVPR, Benichoux2013-ICASSP}, which provide no information about the ground truth. Practical remedies for this problem include exploiting additional data priors \cite{Gong2016-CVPR, Liu2014-TIP, Xiao2016-ECCV} or careful initialization via edge restoration or multi-scale refinement \cite{Krishnan2011-CVPR, Xu2010-ECCV}, to avoid the trivial spiky global minima. 

In contrast, motivated by a careful comparison of MAP and VB approaches, \cite{Wipf2013-BayesianBD, Zhang2013-CVPR} propose to instead constrain $\mb a$ to have unit Frobenius norm -- i.e., to reside on a high-dimensional sphere.\footnote{\cite{Wipf2013-BayesianBD} contains a wealth of additional ideas about the role of sparsity-promoting priors in obtaining good local minima, and on the probabilistic underpinnings of deconvolution. Our experiments support the viewpoint that the key insight in \cite{Wipf2013-BayesianBD} is the role of the spherical constraint in avoiding bad minimizers.} 
This choice is arguably more appropriate for certain scientific applications -- such as microscopy -- in which the kernel $\mb a$ can have negative entries. For image deblurring, $\mb a$ can be assumed to be nonnegative, and the sphere constraint seems less natural from a modeling perspective. 
 
In this paper, we study the geometry of sphere-constrained sparse blind deconvolution. Our goal is to understand when simple algorithms based on nonconvex optimization can exactly recover the convolutional kernel $\mb a$ and the sparse signal $\mb x$. This goal is motivated by the applications described above -- in particular, microscopy data analysis -- in which there is a strong, physical sparsity prior and a clear, physical notion of the ground truth. We develop our theory and algorithms under the assumption that $\mb a$ is a short kernel, and that $\mb x$ is sparsely and randomly supported. We demonstrate through a theoretical analysis of certain (idealized) cases and many numerical experiments that when these assumptions are satisfied, the proposed algorithm correctly recovers $\mb a$, and hence $\mb x$. These results stem from a striking geometric property of sphere-constrained sparse blind deconvolution: although the problem is still nonconvex, every local minimizer $\mb{\bar a}$ is very close to a signed shift-truncation of the ground truth kernel $\mb a_0$. This observation provides a geometric explanation of how the sphere constraint can facilitate sparse blind deconvolution. 

The remainder of this paper is organized as follows. Section 2 discusses the intrinsic symmetries associated with the convolutional operator and their implication on the geometry of sphere-constrained sparse blind deconvolution. Section 3 introduces the optimization-based two stage algorithm and some related technical details. Section 4 discusses two other important extensions in image deblurring and convolutional dictionary learning. Section 5 gives experimental corroboration of our theory, and shows promising results on microscopy data analysis, image deblurring, and convolutional dictionary learning. Section 6 discusses directions for future work.

For simplicity, we assume that the convolutional signals are one dimensional in both our problem formulation and technical proofs; all of our results extend naturally to higher-dimensional signals. Throughout this paper, vectors $\mb v\in\mathbb{R}^k$ are indexed as $\mb v =[v_0,v_1,\cdots,v_{k-1}]$, and $[\cdot]_{m}$ denotes the modulo-$m$ operation. We use $\norm{\cdot}{}$ to denote the operator norm, and $\norm{\cdot}{p}$ to denote the entry wise $\ell^p$ norm. A projection onto the Frobenius sphere is denoted with $\proj{\cdot}{\bb S}=\frac{\cdot}{\norm{\cdot}{2}}$, and a projection onto subset $I$ is denoted with $(\cdot)_{I}$.

\section{Symmetry and Global Geometry}
Without loss of generality, we assume the observation data $\mb y$ is generated via a circular convolution $\cconv$ of the ground truth $\mb a_0\in\R^k$ and $\mb x_0\in\R^m$:
\begin{equation}
\mb y(\mb a_0,\mb x_0)=\mb a_0\cconv\mb x_0=\extend{\mb a_0}\cconv\mb x_0\in\R^m.
\label{eqn:circulant_conv}
\end{equation}
Here, $\extend{\mb a_0}\in\R^m$ denote the zero padded $m$-length version of $\mb a_0$, which can be expressed as $\extend{\mb a_0}=\injector\mb a_0$ with $\injector:\R^k\to\R^m$ be a zero padding operator.
Its adjoint $\injector^*:\R^m\to\R^k$ acts as a projection to lower dimensional space by keeping the first $k$ components. Equivalently, we can write
\begin{equation}
\mb y(\mb a_0,\mb x_0)=\mb C_{\extend{\mb a_0}}\mb x_0=\mb C_{\mb x_0}\extend{\mb a_0}.
\end{equation}
Here, $\mb C_{\mb v}\in\R^{m\times m}$ is the circulant matrix generated from vector $\mb v$, whose $j$-th column is a cyclic shift $s_{j-1}[\mb v]$ of the vector $\mb v$: 
\begin{equation}
\shift{\mb v}{\tau}(i)=\mb v([i-\tau]_m),\quad\forall\;i\in[0,\cdots,m-1].
\end{equation}

\subsection{Symmetries and Symmetry Breaking}
The SBD problem exhibits a {\em scaled-shift symmetry}, which derives from the symmetries of the convolution operator. Namely, given a pair $(\mb a_0, \mb x_0)$ satisfying $\mb y = \mb a_0 \circledast \mb x_0$, for any nonzero scalar $\alpha$ and integer $\tau$ 
\begin{equation}
\mb y = \left( \alpha s_{\tau}[ \widetilde{\mb a_0} ] \right) \circledast \left( \alpha^{-1} s_{-\tau} [ \mb x_0 ] \right).
\end{equation} 
Note that a scaled shift $\alpha^{-1} s_{-\tau}[ \mb x_0 ]$ of a sparse signal $\mb x_0$ remains sparse, and that a scaled shift $\alpha s_{\tau}[ \widetilde{\mb a_0} ]$ of a length-$k$ kernel $\mb a_0$ still has $k$-nonzero entries. So, these symmetries are intrinsic to the SBD problem, as formulated here. We can only hope to recover $(\mb a_0, \mb x_0)$ up to this symmetry. 

The presence of nontrivial symmetries is a hallmark of bilinear problems arising in practice -- see, e.g., \cite{SQW15-pp, SQW16-pp} for examples from dictionary learning and generalized phase retrieval. Symmetries render straightforward approaches to convexify the problem ineffective.\footnote{Given any set of points where the convex objective function achieves equal values, the function value will be no larger at any convex combination of them.} 
They also raise challenges for nonconvex optimization: equivalent symmetric solutions correspond to multiple disconnected global optima. This creates a very complicated objective landscape, which could potentially also contain spurious local optimizers.  Certain highly symmetric nonconvex problems arising in signal processing do not exhibit spurious minimizers \cite{SQW15-pp, SQW16-pp}, however, proving this can be challenging. 

\subsubsection{Symmetry breaking.}We employ a weak symmetry breaking mechanism by constraining $\mb a \in \mathbb{S}^{k-1}$ \footnote{This is motivated in part by \cite{SQW15-pp}, which demonstrates that a certain formulation of the dictionary learning problem over the sphere has no spurious local minimizers, even for relatively dense target representations. The ``simplex constrained'' analogue of that work, which optimizes over hyperplanes, requires the target solution to be much sparser \cite{Spielman12-pp}.}: we reduce the scale ambiguity to a sign ambiguity, by constraining $\mb a$ to have unit Frobenius norm; we mitigate the shift ambiguity by constraining $\mb a$ to be supported on the first $k$ entries. 

In general, $s_{\tau}[ \widetilde{\mb a_0}]$ are not be supported on the first $k$ entries, hence constraining $\mb a$ to be supported on the first $k$ entries removes the shift symmetry. However, effects of such shift symmetry still persist. Since the restriction $\injector^* s_{\tau}[\widetilde{\mb a_0}]$ to the first $k$ entries can be convolved with the sparse signal $s_{-\tau}[\mb x_0]$ to produce a near approximation to $\mb y$:
\begin{equation}
\left( \injector^* s_{\tau}[ \widetilde{\mb a_0} ] \right) \circledast s_{-\tau}[ \mb x_0 ] \approx \mb y,
\end{equation} 
especially when the shift $\abs{\tau}$ is small. We will see that (i) these symmetric solutions $\injector^* s_{\tau}[\widetilde{\mb a_0}]$ persist as local minima of a natural optimization formulation of the SBD problem, but that (ii) under conditions, these are the {\em only} local minima.

\subsection{Global Geometry on the Sphere}
We study the following objective function, which can be viewed as balancing sparsity of $\mb x$ with fidelity to the observation $\mb y$: \footnote{Similar formulation can be found in lot of sparse representation problems \cite{Mairal2014}.}
\begin{equation}
\label{eqn:obj}
\min_{\mb a\in\bb S^{k-1},\mb x}\psi(\mb a,\mb x)\doteq\tfrac{1}{2}\|\mb y-\mb a\cconv\mb x\|_2^2+\lambda r(\mb x).
\end{equation}
When $\mb x_0$ is long and random, it is more convenient to study this function through its ``marginalization'' 
\begin{equation}
\label{eqn:phi_def}
\varphi(\mb a)\doteq\min_{\mb x}\psi(\mb a,\mb x),
\end{equation}
which is defined over the sphere $\mathbb{S}^{k-1}$.

In Figure \ref{fig:geometry}, we plot the function value of $\varphi(\mb a)$ on the sphere $\mb a\in\bb S^2$: red and blue imply larger and smaller objective value respectively and there are several local minima. For this highly nonconvex function, the ground truth $\mb a_0$ achieves the global minimum, while other local minima $\mb{\bar a}$ are very close to certain signed shift truncations of the ground truth. Figure \ref{fig:geometry} (right) exhibits an example of a local minimum in a higher-dimensional problem. 

\begin{figure*}
\begin{center}
\includegraphics[width=.3\textwidth]{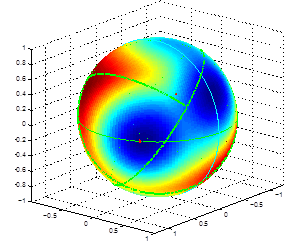}
\hspace{.2in}
\includegraphics[width=.4\textwidth]{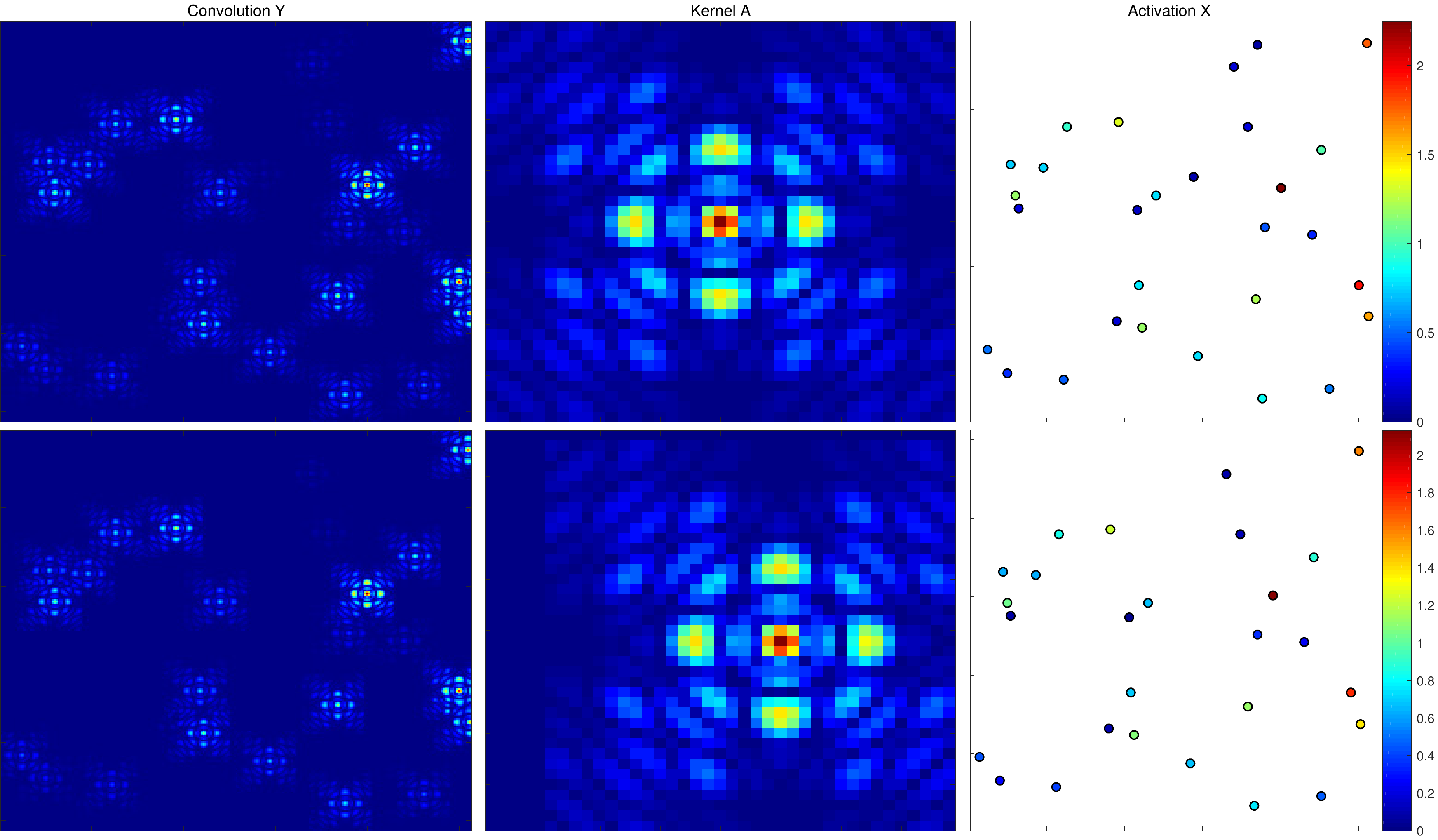}
\end{center}
\vspace{-.2in}
\caption{{\bf Geometry on the $\ell^2$ ball} for fixed $\mb{a}_0$ and generic $\mb{x}_0$. Left: the objective $\varphi(\mb{a})$ in a low dimensional setting $\mb{a}\in\bb{S}^2$ -- dark blue represents small values while dark red represents large values. All local minima are close to signed shift truncations of the ground truth $\mb{a}_0$, with $\mb{a}_0$ itself achieving global minimum. The green lines indicate regions where $\mb{a}$ are ill-posed as convolutional kernels.
Right: a shift truncation $\mb{a}$ achieves a local minimum of $\varphi(\mb{a})$ in a high dimensional setting. Shown here is the ground truth $\mb{y}=\mb{a}_0\cconv\mb{x}_0, \mb{a}_0$, and $\mb{x}_0$ (top right) versus their respective recovered quantities $\mb{a}\cconv\mb{x}, \mb{a}$, and $\mb{x}$ (bottom right).
}
\label{fig:geometry}
\end{figure*}

\subsubsection{Analysis under Restricted Settings}

Demonstrating that this observation holds in general is challenging: for most reasonable choices of the regularizer $r$, there is no closed form expression for the objective $\varphi( \mb a )$. We develop an analysis under several simplifying assumptions. Throughout, we let $r(\mb x)$ be the $\ell^1$ norm, although similar conclusions hold for other sparsifying regularizers. With this choice, we can simplify the objective $\varphi$ by dividing the sphere $\mathbb{S}^{k-1}$ via the sign-support pattern of the minimizing $\mb x^*(\mb a)$:
\begin{equation}
\mb x^*(\mb a)=\arg\min_{\mb x}\tfrac12\norm{\mb y-\mb a\cconv\mb x}2^2+\lambda\norm{\mb x}1.
\label{eqn:lasso}
\end{equation}
Let $I$ and $\mb\sigma$ denote the support and sign of $\mb x^*$
\begin{equation}
I=\supp{\mb x^*},\quad\mb\sigma=\sign(\mb x^*),
\end{equation}
then the whole sphere can be divided via the sign support pattern
\begin{equation}
\bb S^{k-1}=\bigcup_{\mb\sigma}\mc R_{\mb\sigma},\quad \mc R_{\mb\sigma}=\Brac{\mb a\mid\sign(\mb x^*(\mb a))=\mb\sigma}.
\end{equation}
On each $\mc R_{\mb\sigma}$ where the sign support pattern $\mb\sigma$ remains the same, the stationarity condition for minimizer $\mb x^*(\mb a)$ implies
\begin{equation}
\mb x^*_{I}(\mb a)=\paren{\mb C^*_{\mb a}\mb C_{\mb a}}^{-1}_{I}\paren{\mb C^*_{\mb a}\mb y-\lambda\mb\sigma}_{I}.
\end{equation}
Plugging above expression back to the original objective function $\varphi(\mb a)$ yields
\begin{equation}
\label{eqn:phi_sigma}
\varphi_{\mb\sigma}(\mb a)=-\tfrac12\paren{\mb C^*_{\mb a}\mb y-\lambda\mb\sigma}^*_{I}\paren{\mb C^*_{\mb a}\mb C_{\mb a}}^{-1}_{I}\paren{\mb C^*_{\mb a}\mb y-\lambda\mb\sigma}_{I}+ \tfrac12\norm{\mb y}2^2.
\end{equation}
Although the objective function $\varphi(\mb a)$ can be substantially simplified by removing the $\mb x$ variable in this way, it still maintains a complicated dependence on $\mb a$. To obtain some preliminary insights, we make two simplifications for easier calculation while preserving important characteristics of the geometry of $\varphi$:

\vspace{.05in}
{\bf Simplification I: $\mb x_0=\mb\delta$.} We maximally simplify the underlying sparse signal as a single spike $\mb\delta$ and the observation will be $\mb y=\mb a_0\cconv\mb x_0=\extend{\mb a_0}$. This case itself is trivial, but its function geometry is a basic but important case to be understood. This simple case also yields intuitions that carry over to the less trivial situation in which $\mb x_0$ is a long random vector.

\vspace{.05in}
{\bf Simplification II: $\mb C^*_{\mb a}\mb C_{\mb a}\to\mb I$.} For a random $\mb a\in\bb S^{k-1}$, its expectation satisfies $\expect{\mb C^*_{\mb a}\mb C_{\mb a}}=\mb I$. Here, we simply use the identity matrix to replace any $\mb C^*_{\mb a}\mb C_{\mb a}$ and therefore reduce the complexity of Equation \ref{eqn:phi_sigma}.

With these two simplifications, the original objective problem can be replaced with the following:
\begin{equation}
\text{minimize} \quad \phihat(\mb a) \quad \text{subject to} \quad \mb a \in \bb S^{k-1},
\end{equation}
where
\begin{equation*}
\phihat(\mb a)\doteq\min_{\mb x} \; \tfrac12\norm{\extend{\mb a_0}}2^2 + \tfrac{1}{2} \norm{\mb x}{2}^2  - \innerprod{ \extend{\mb a} \cconv \mb x }{\extend{\mb a_0}} + \lambda \norm{\mb x}1.
\label{eqn:phihat}
\end{equation*}
In this case, the minimizing $\mb x^*(\mb a)$ has a simple closed form solution:
\begin{equation}
\mb x^*(\mb a)=\soft{\convmtx{\mb a}^* \extend{\mb a_0}}{\lambda} = \soft{\revmtx^* \injector \mb a}{\lambda},
\end{equation}
here, $\soft{u}{\lambda} = \sign(u) \max\set{ |u| - \lambda, 0 }$ is the entry-wise soft-thresholding operator and $\revmtx\in \R^{m \times m}$ is the reversed circulant matrix for $\mb a_0$ defined via
\begin{equation} 
\revmtx = \left[ \begin{array}{c|c|c|c} \shift{\extend{\mb a_0}}{0} & \shift{\extend{\mb a_0}}{-1} & \dots & \shift{\extend{\mb a_0}}{-(m-1)} \end{array} \right].
\end{equation}
On a constant sign support pattern $\mb\sigma$, $\phihat(\mb a)$ can be written into a simpler quadratic form: 
\begin{equation}
\phihat_{\mb\sigma}(\mb a)=-\tfrac12\norm{\paren{\mb C^*_{\mb a}{ \extend{\mb a_0}}-\lambda\mb\sigma}_{ I}}2^2+\tfrac12\norm{\extend{\mb a_0}}2^2.
\label{eqn:phihat_sigma}
\end{equation}
For this surrogate $\phihat(\mb a)$, we can show that if $\lambda$ is sufficiently large compared to the magnitude of $\mb x_0$, every strict local minimizer is a signed shift truncation of the ground truth:

\begin{theorem}
\label{thm:lm-nearly-orthogonal} 
Define the set of possible supports of minimizer $\mb x^*$ with 
\vspace{-.05in}
\begin{equation}
\mc I  = \set{ \mathrm{supp}\left( \soft{ \revmtx^* \injector \mb a }{\lambda } \right) \mid \mb a \in \bb S^{k-1} }.
\vspace{-.05in}
\end{equation}
For each nonempty support $I = \set{ i_1 < i_2 <\cdots < i_{|I|} }$, let

\begin{equation*}
\mb W_I = \brac{ \frac{\injector^* \shift{\extend{\mb a_0}}{-i_1} }{ \norm{\injector^* \shift{\extend{\mb a_0}}{-i_1} }{F} } 
\middle| \frac{\injector^* \shift{\extend{\mb a_0}}{-i_2} }{ \norm{ \injector^* \shift{\extend{\mb a_0}}{-i_2} }{F} } 
\middle| \dots \middle| \frac{\injector^* \shift{\extend{\mb a_0}}{-i_{|I|}} }{ \norm{\injector^* \shift{\extend{\mb a_0}}{-i_{|I|}}}{F} } }
\end{equation*}
Suppose that $\lambda<1$ and that for every nonempty $I \in \mc I$, 
\begin{equation}
\norm{ \mb W_I^* \mb W_I - \mb I }{} < \frac{\lambda^2 }{6},
\end{equation}
then every local minimum $\bar{\mb a}$ of $\phihat$ over $\bb S^{k-1}$ satisfies either $\bar{\mb a} \in\mc R_{\mb 0}$ (in which case $\bar{\mb a}$ is also a global maximum), or 
\begin{equation}
\bar{\mb a} = \pm \frac{ \injector^* \shift{\extend{\mb a_0}}{\tau} }{ \norm{\injector^* \shift{\extend{\mb a_0}}{\tau} }{F} }
= \pm \proj{\injector^* \shift{\extend{\mb a_0}}{-\tau} }{\bb S}
\end{equation}
with $\mb x^\star(\bar{\mb a})=\pm\soft{ \norm{\injector^* \shift{\extend{\mb a_0}}{\tau}}{F}}{
\lambda}\shift{\mb x_0}{-\tau}$ for some shift $\tau$. 
\end{theorem}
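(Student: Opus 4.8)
The plan is to classify all Riemannian critical points of $\phihat$ on $\bb S^{k-1}$ and show the only local minima are the claimed signed shift-truncations. First I would record two structural facts. Because $\mb x^*(\mb a)=\soft{\revmtx^*\injector\mb a}{\lambda}$ is Lipschitz in $\mb a$, Danskin's theorem makes $\phihat$ continuously differentiable (its Hessian only jumps across the region boundaries $\partial\mc R_{\mb\sigma}$), so every local minimum is a critical point and it suffices to argue region by region. Moreover $\phihat(\mb a)=\tfrac12\norm{\extend{\mb a_0}}2^2-\tfrac12\norm{\mb x^*(\mb a)}2^2$, so minimizing $\phihat$ is the same as maximizing the convex function $\tfrac12\norm{\mb x^*(\mb a)}2^2$ on the sphere. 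Fix a region $\mc R_{\mb\sigma}$ with support $I$, write $d_j=\norm{\injector^*\shift{\extend{\mb a_0}}{-j}}{F}$ and let $\wh{\mb w}_j$ be the columns of $\mb W_I$, so that the active coordinates are $c_j=\innerprod{\mb a}{\wh{\mb w}_j}$ and $z_j=d_jc_j$ with $|z_j|>\lambda$; since $|c_j|\le1$ this already forces $d_j>\lambda$ for every active $j$.

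Next I would extract the first-order condition. Zeroing the tangential gradient gives $\mb W_I\mb u=\nu\bar{\mb a}$, where $u_j=d_j\sign(c_j)(d_j|c_j|-\lambda)$ and, taking the inner product with $\bar{\mb a}$, $\nu=\sum_{j\in I}|z_j|(|z_j|-\lambda)>0$; in particular $\bar{\mb a}\in\mathrm{span}\{\wh{\mb w}_j\}_{j\in I}$. Left-multiplying by $\mb W_I^*$ and writing $\mb W_I^*\mb W_I=\mb I+\mb E$ with $\norm{\mb E}{}<\lambda^2/6$ turns this into the coordinatewise relation $(d_j^2-\nu)|c_j|=\lambda d_j+O(\norm{\mb E}{})$. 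As the leading right-hand side $\lambda d_j>0$ dominates the error, every active $d_j^2$ exceeds $\nu$; using $|c_j|\le1$ and $d_j>\lambda$ in the smallest-$d_j$ relation then yields the quantitative gap $\min_{j\in I}d_j^2-\nu\ge\tfrac56\lambda^2$.

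The crux — and the step I expect to be the main obstacle — is the second-order test that eliminates $|I|\ge2$. At the critical point the Riemannian Hessian of $\phihat$ is $(\mb I-\bar{\mb a}\bar{\mb a}^*)\paren{\nu\mb I-\mb W_I\mb D^2\mb W_I^*}(\mb I-\bar{\mb a}\bar{\mb a}^*)$ with $\mb D=\diag(d_j)$, so $\bar{\mb a}$ can be a local minimum only if $\innerprod{\mb v}{\mb W_I\mb D^2\mb W_I^*\mb v}\le\nu\norm{\mb v}2^2$ for all $\mb v\perp\bar{\mb a}$. The nonzero eigenvalues of $\mb W_I\mb D^2\mb W_I^*$ coincide with those of $\mb D^2+\mb D\mb E\mb D$, hence lie within $\norm{\mb E}{}$ of the numbers $\{d_j^2\}$. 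When $|I|\ge2$ a Courant–Fischer (interlacing) argument — the tangent hyperplane meets the top two-dimensional eigenspace — shows that the tangential maximum of this quadratic form is at least the second-largest $d_j^2$ minus $\norm{\mb E}{}$, which by the previous paragraph exceeds $\nu+\tfrac23\lambda^2$. This strictly violates the local-minimum inequality and exhibits a descent direction, so every critical point with $|I|\ge2$ is a strict saddle. The entire hypothesis $\norm{\mb W_I^*\mb W_I-\mb I}{}<\lambda^2/6$ exists precisely to make the $O(\norm{\mb E}{})$ perturbations smaller than the $\Theta(\lambda^2)$ spectral gap, and carefully balancing these two quantities is where the real work lies.

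It remains to collect the survivors. If $|I|=1$ then $\mb W_I\mb D^2\mb W_I^*$ is rank one with range $\mathrm{span}\{\bar{\mb a}\}$, which the tangential projection annihilates, leaving Hessian $\nu(\mb I-\bar{\mb a}\bar{\mb a}^*)\succ0$; the first-order condition then forces $\bar{\mb a}=\pm\wh{\mb w}_\tau=\pm\proj{\injector^*\shift{\extend{\mb a_0}}{-\tau}}{\bb S}$ with the stated $\mb x^*(\bar{\mb a})$, and the remaining coordinates sit strictly below threshold, so these points are genuinely interior. On $\mc R_{\mb 0}$ one has $\phihat\equiv\tfrac12\norm{\extend{\mb a_0}}2^2$, the global maximum, which is the alternative in the statement. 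Finally, a boundary critical point has some marginal coordinate exactly at threshold and therefore coincides with a critical point of the adjacent piece whose effective support drops those coordinates; if that support has size at least two it inherits the one-sided strict-saddle direction above, and if it has size one it is one of the interior minimizers already found — so no boundary point is a spurious local minimum, completing the dichotomy.
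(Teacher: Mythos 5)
Your proposal is correct and follows essentially the same route as the paper's proof: the same piecewise-quadratic decomposition over sign-support regions, the same first-order identity $\mb W_I \mb u = \nu \bar{\mb a}$ with $\nu = \gamma > 0$ yielding the gap $d_{\min}^2 - \nu = \Omega(\lambda^2)$, the same second-order elimination of $|I| \ge 2$ by comparing the spectrum of $\mb W_I \mb D^2 \mb W_I^*$ (a perturbation of $\diag(d_j^2)$ controlled by $\norm{\mb W_I^*\mb W_I - \mb I}{}$) against $\nu$ on the tangent space, and the same handling of the $|I|=1$, $\mc R_{\mb 0}$, and boundary cases. The only cosmetic difference is that you extract the negative-curvature direction via Courant--Fischer applied to the second-largest eigenvalue on the hyperplane $\bar{\mb a}^\perp$, whereas the paper lower-bounds the smallest nonzero eigenvalue of $\mb U \diag(\mb \eta)^2 \mb U^*$ after orthonormalizing $\mb U$ and restricts to $\bar{\mb a}^\perp \cap \range{\mb M_{\mb\sigma}}$; both are valid and the constants work out either way.
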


\begin{proof}
Please refer to the supplement.
\end{proof}

This theorem says that the only local minima in this idealized case are signed shift truncations of the ground truth $\mb a_0$, with certain choice of $\lambda$. Moreover, on those local minima $\bar{\mb a}$, the minimizing sparse $\mb x^\star(\bar{\mb a})$ correspond to the soft thresholded, oppositely shifted ground truth $\mb x_0$. The quantity $\norm{ \mb W_I^* \mb W_I - \mb I }{}$ measures the orthogonality of different shifts of $\mb a_0$. In particular, if $\mb a_0$ is benign enough in the sense that any two different shifts of $\mb a_0$ are uncorrelated, or $\norm{ \mb W_I^* \mb W_I - \mb I }{}\to0$ for any $I$, then any nonzero $\lambda$ guarantees the desired geometry. On the other hand, for a fixed $\mb a_0$, both $\abs{\mc I}$ and $\abs{I}$ becomes smaller as $\lambda$ increases, therefore the constraint $\norm{ \mb W_I^* \mb W_I - \mb I }{}< \lambda^2 /6$ is more likely to be satisfied. 

A similar result holds when $\mb x_0$ is separated enough that copies of the kernel do not overlap. It also holds if $\mb x_0$ is a long, sufficiently sparse random vector. For example, if the entries of $\mb x_0$ satisfy a Bernoulli-Gaussian distribution $\mb x_0(i) = \Omega(i) \mb v(i)$, with $\Omega(i) \sim \mathrm{Ber}(\theta)$ and $\mb v(i) \sim \mc N(0,1)$, and the probability $\theta$ diminishes sufficiently quickly with $k$. We conjecture that this phenomenon holds much more broadly. In particular, determining how slowly $\theta$ can diminish with $k$ is an open problem.

\subsection{Global Geometry on the Simplex}
In the application to image deblurring, the blur kernel is always positive and sums to $1$, which naturally leads to a simplex-constrained optimization problem. However, the optimization landscape changes drastically when the convolutional kernel $\mb a$ is constrained to live on the $\ell^1$ norm ball. The objective value of the same objective function over the $\ell^1$ ball is shown in Figure \ref{fig:geometry_simplex}. 
\begin{figure}[H]
\centering
\includegraphics[width=.7\textwidth]{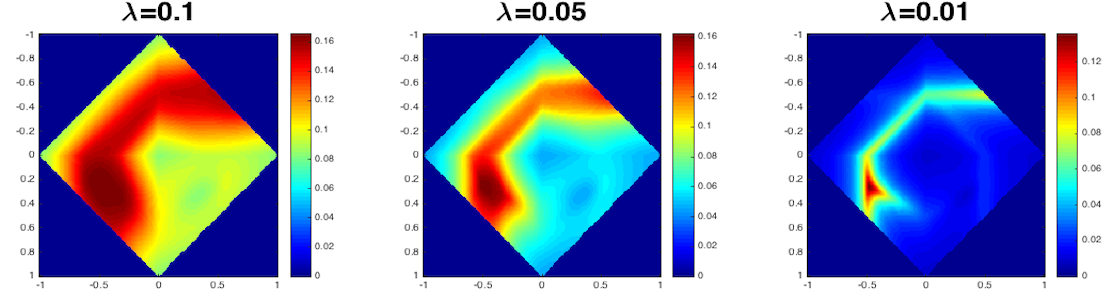}
\vspace{-.1in}
\caption{{\bf Geometry on the $\ell^1$ ball:} The trivial spike convolutional kernel is the global minimizer, while the ground truth $[1/3, 1/3, 1/3]$ becomes a local minimizer.}
\label{fig:geometry_simplex}
\end{figure} 

There is a significant difference induced by these two constraints: the trivial spike kernel ($\mb a=\mb\delta$) becomes the global minima and other meaningful solutions become local minima with the $\ell^1$ norm constraint \cite{Benichoux2013-ICASSP}, while the spherical constraint always renders local minima close to some signed shift truncation of the ground truth. This important empirical knowledge of the structure of the local minima enables us to infer the ground truth from any local minimizer.

\section{A Two-Stage Algorithm}
Inspired by the geometric property that every local minimum of the simplified problem $\phihat$ is a signed shift-truncation of the ground truth $\mb a_0$, we present a two stage algorithm for reliable recovery of the ground truth $\mb a_0$ in this section. In the first stage, the algorithm recovers some signed shift truncation of the ground truth, and the following stage infers the ground truth from this partial recovery.

\subsection{Stage I: Find the Signed Shift Truncation}
Theorem \ref{thm:lm-nearly-orthogonal} suggests that $\lambda$ needs to be relatively large to guarantee that all local minimizers are signed shift truncations of the ground truth. However, this is not sufficient to guarantee the success of an optimization algorithm due to the non-differentiability of the $\ell^1$ regularizer at $\mb x^\star=\mb 0$. Because of this non-differentiability, when $\lambda$ is too large, there is a nonzero measure set of $\mb a$ where $\norm{ \mb C^*_{\mb a} \mb y }{\infty} \le \lambda$ and therefore $\mb x^\star(\mb a) = \mb0$. These $\mb a$ are not correlated with any signed shift truncation of $\mb a_0$ and are the global maxima of $\varphi$. The objective function is constant over this region, so there is no way to escape using only local information. 

One way to cope with this flat global maxima region is to replace the nondifferentiable $\ell^1$ sparsity penalty with a differentiable one. A natural choice is the huber-$\mu$ function, which can be seen as an $\ell^1$ penalty but with a rounded bottom for $\abs{x_i}\leq \mu$:
\begin{equation}
h_{\mu}(\mb x)= \sum_{\abs{x_i}\leq \mu} \left(\frac{x_i^2}{2\mu} + \frac{\mu}{2}\right) + \sum_{\abs{x_i} > \mu} \abs{x_i}
\end{equation}
As we choose $\mu\ll\lambda$, the $\mu$-huber function closely approximates the $\ell^1$ norm, which still maintaining the effect of ``smoothing'' the flat region.
The flat region for the $\ell^1$ penalty objective occurs when $\mb x^*(\mb a) = \mb 0$, correspondingly we define a region as $\mc R_{h,0}$ with small $\mb x^*(\mb a)$ such that 
\begin{equation}
\label{eqn:R0}
\mc R_{h,0}:=\{\mb a\in\bb S^{k-1}:\norm{\mb x^*(\mb a)}{\infty} \leq \mu\}.
\end{equation}
Within the $\mc R_{h,0}$ region, the original objective function can be rewritten into a simpler form:
\begin{equation}
\varphi_{h_\mu}(\mb a)=\tfrac12\|\mb y-\mb a\cconv\mb x\|_2^2+\tfrac{\lambda}{2\mu}\|\mb x\|_2^2+\tfrac{\mu n}2,
\end{equation}
thus the optimality condition for $\mb x^*$ implies
\begin{equation}
\label{eqn:xopt_R0}
\mb x^*(\mb a)=\paren{\mb C_{\mb a}^*\mb C_{\mb a}+\tfrac{\lambda}{\mu}\mb I}^{-1}\mb C_{\mb a}^*\mb y
\;\approx\; \tfrac{\mu}{\lambda}\mb C_{\mb a}^*\mb y.
\end{equation}
Plugging $\mb x^*(\mb a)$ back to \eqref{eqn:obj} and ignoring the higher order term $O(\frac{\mu^2}{\lambda^2})$ yields
\begin{equation}
\varphi_{h_\mu}(\mb a)\approx-\tfrac{\mu}{2\lambda}\norm{\mb y\cconv\mb a}2^2+\tfrac{1}{2}\norm{\mb y}2^2+\tfrac{\mu n}2.
\end{equation}

In this case, minimization of the objective function $\varphi_{h_\mu}(\mb a)$
within the region $\mc R_{h,0}$ is equivalent to finding the maximum eigenvalue of the matrix $\mb\iota^*\mb C_{\mb y}^*\mb C_{\mb y}\mb\iota$ with the corresponding leading eigenvectors $e_1(\mb\iota^*\mb C_{\mb y}^*\mb C_{\mb y}\mb\iota)$ achieving the local minima. However, these points can be excluded from $\mc R_{h,0}$ by setting $\lambda<\min_{\mb v \in e_1(\mb \iota^*\mb C^*_{\mb y}\mb C_{\mb y}\mb \iota)}\norm{\mb C^*_{\mb y}\mb\iota\mb v}{\infty}$.\footnote{A computationally easier upper bound would be $\sqrt\frac{\lambda_1(\mb \iota^*\mb C^*_{\mb y}\mb C_{\mb y}\mb \iota)}{k}$. Note that in some scenario, there exists a local minima appearing in either region $\mc R_{h,0}$ or $\mc R_{h,0}^c$ regardless of how we set $\lambda$. Such extreme case happens when the ground truth convolutional kernel is only supported on a small consecutive portion of its full size, hence a tight estimate of the kernel size is preferred. }

With above modifications, the original flat local maxima region $\mc R_{h,0}$ becomes concave and always have a direction of negative curvature for the algorithm to escape $\mc R_{h,0}$. Hence, the first stage of the algorithm can find a signed shift-truncation of the ground truth $\bar{\mb a}  = \pm \frac{ \injector^* \shift{\extend{\mb a_0}}{\tau} }{ \norm{\injector^* \shift{\extend{\mb a_0}}{\tau} }{F}}$ as desired.

\subsection{Stage II: Infer the Ground Truth}
The second stage of the algorithm aims to recover the ground truth from its signed shift truncation $\bar{\mb a}$. To recover the truncated part, we first put $\bar{\mb a}$ in a higher dimensional sphere by zero padding (Figure \ref{fig:zero_pad}), and then recover the ground truth $\mb a_0$ on this higher dimensional sphere. Intuitively, as $\bar{\mb a}$ still captures a considerable portion of the ground truth $\mb a_0$ (the zero padded $\bar{\mb a}$ is close to the shifted $\mb a_0$ in a higher dimensional space), the zero padded $\bar{\mb a}$ serves as a good initialization. This intuition is made rigorous in the following lemma:
\begin{lemma}
\label{lem:stage2}
Let $\lambda_{rel} = \lambda/\norm{\mb x_0}{\infty}$,
suppose the ground truth $\mb a_0$ satisfies 
\begin{equation}
\abs{\innerprod{\mb a_0}{\injector\shift{\extend{\mb a_0}}{\tau\neq 0}}}<\lambda_{rel}^2-\paren{2+1/\lambda_{rel}^2}\sqrt{1-\lambda_{rel}^2}
\end{equation}
for any nonzero shift $\tau$, and $\mb x_0$ is separated enough such that any two nonzero components are at least $2k$ entries away from each other. If initialized at some $\mb a\in\bb S^{k-1}$ that $\abs{\innerprod{\mb a}{\mb a_0}}>\lambda_{rel}$, 
a small-stepping projected gradient method minimizing $\varphi(\mb a)$ recovers the signed ground truth $\pm\mb a_0$.
\end{lemma}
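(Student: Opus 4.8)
The plan is to exploit the separation hypothesis to reduce the landscape near $\mb a_0$ to a clean block structure, and then to show that the Riemannian gradient of $\varphi$ points from any admissible iterate straight toward $\mb a_0$, so that the correlation $\rho \doteq \innerprod{\mb a}{\mb a_0}$ acts as a Lyapunov function increasing monotonically to $1$. By the sign symmetry we may assume $\rho_{\mathrm{init}} > \lambda_{rel} > 0$.

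First I would set up the reduction. Write $\mb y = \sum_j x_0(s_j)\shift{\extend{\mb a_0}}{s_j}$ over the support $\Omega = \set{s_j}$ of $\mb x_0$. Because consecutive $s_j$ differ by at least $2k$ while both $\mb a$ and $\mb a_0$ have length $k$, the translated copies occupy disjoint windows. Consequently, for $\mb a$ correlated with $\mb a_0$, the matched-filter output obeys $(\mb C_{\mb a}^* \mb y)_{s_j} = x_0(s_j)\innerprod{\mb a}{\mb a_0} = x_0(s_j)\rho$, and on any subset $J \subseteq \Omega$ the Gram block satisfies $(\mb C_{\mb a}^*\mb C_{\mb a})_J = \norm{\mb a}{2}^2 \mb I = \mb I$. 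I would then verify via the Lasso KKT conditions that $\mb x^*(\mb a)$ has support exactly $J = \set{\, j \in \Omega : \abs{x_0(s_j)}\rho > \lambda \,}$ with signs matching $\sign(\mb x_0)$; since $\rho > \lambda_{rel}$ forces the largest entry active, $J$ is nonempty. On the corresponding region $\mc R_{\mb\sigma}$, Equation \ref{eqn:phi_sigma} collapses, using the identity Gram block, to $\varphi(\mb a) = \tfrac12\norm{\mb y}{2}^2 - \tfrac12\sum_{j\in J}\paren{\abs{x_0(s_j)}\rho - \lambda}^2$.

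Next I would differentiate. The only $\mb a$-dependence is through $\rho = \innerprod{\mb a}{\mb a_0}$, so the Euclidean gradient is $\nabla\varphi(\mb a) = -c(\rho)\,\mb a_0$ with $c(\rho) = \sum_{j\in J}\paren{\abs{x_0(s_j)}\rho - \lambda}\abs{x_0(s_j)} > 0$, and the Riemannian gradient is $\grad{\varphi}{\mb a} = -c(\rho)\paren{\mb a_0 - \rho\mb a}$. Thus the negative gradient is a positive multiple of the component of $\mb a_0$ orthogonal to $\mb a$, and a small projected step $\mb a^+ = \proj{\mb a - \eta\,\grad{\varphi}{\mb a}}{\bb S}$ yields $\rho^+ = \frac{\rho + \eta c(1-\rho^2)}{\sqrt{1 + \eta^2 c^2(1-\rho^2)}} > \rho$ for $\eta$ small. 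Because the gradient formula is continuous across the boundaries where $J$ changes (a newly active coordinate enters with zero weight), this monotone increase persists as the active set grows, and $\rho \uparrow 1$, i.e. $\mb a \to \mb a_0$.

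The main obstacle is proving the iterates never leave the admissible region — concretely, that no spurious activation appears at a shifted site $s_j + \tau$, which would destroy the disjoint-window structure and invalidate the identity Gram block. Writing $\mb a = \rho\mb a_0 + \sqrt{1-\rho^2}\,\mb a_\perp$ with $\mb a_\perp \perp \mb a_0$, the matched-filter output at such a site is $x_0(s_j)\innerprod{\mb a}{\injector^*\shift{\extend{\mb a_0}}{-\tau}}$, whose magnitude is at most $\abs{x_0(s_j)}\paren{\rho\,\abs{\innerprod{\mb a_0}{\injector\shift{\extend{\mb a_0}}{\tau}}} + \sqrt{1-\rho^2}}$. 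I would show this stays strictly below $\lambda$ — so soft-thresholding kills it and the support stays clean — precisely when the shift-autocorrelation obeys the stated bound $\lambda_{rel}^2 - \paren{2 + 1/\lambda_{rel}^2}\sqrt{1-\lambda_{rel}^2}$, evaluated at the worst case $\rho = \lambda_{rel}$, with the factor $2 + 1/\lambda_{rel}^2$ absorbing the deviation of $\mb x^*$ from $\mb x_0$ and the leakage through $\mb a_\perp$. Establishing that this region is forward-invariant under the monotone-in-$\rho$ dynamics and contains the initialization then closes the argument; the delicate bookkeeping of the Lasso dual-feasibility conditions at the shifted coordinates is the technically demanding part.
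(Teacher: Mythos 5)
Your proposal follows essentially the same route as the paper's proof: the separation hypothesis gives the closed form $\mb x^\star(\mb a)=\soft{\innerprod{\mb a}{\mb a_0}\mb x_0}{\lambda}$ (your $c(\rho)$ is exactly the paper's $\innerprod{\mb x^\star}{\mb x_0}$), the Riemannian gradient reduces to a positive multiple of $-(\mb I-\mb a\mb a^*)\mb a_0$, and the autocorrelation condition is used to rule out spurious activations at shifted support sites so that the region is forward-invariant along the flow toward $\pm\mb a_0$. The only difference is presentational — you track the correlation $\rho$ explicitly as a Lyapunov quantity, while the paper argues support recovery along the geodesic — so no further comparison is needed.
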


\begin{proof}
Please refer to the supplement.
\end{proof}
This lemma says when the initial point $\mb a$ is close enough to the ground truth, 
the gradient always points to $\mb a_0$ as long as $\abs{\innerprod{\mb a_0}{\injector\shift{\extend{\mb a_0}}{\tau\neq0}}}$ is sufficiently small. Theorem \ref{thm:lm-nearly-orthogonal} suggests that the first stage of the algorithm finds one local minimum $\mb{\bar a}$ that $\abs{\innerprod{\mb{\bar a}}{\mb a_0}}\ge \lambda_{rel}$. Hence, the second stage of the algorithm, which minimizes the same objective function but on a higher dimensional sphere, recovers the ground truth up to sign shift ambiguity as desired.

\begin{figure}[t]
\centering
\includegraphics[width=.3\textwidth]{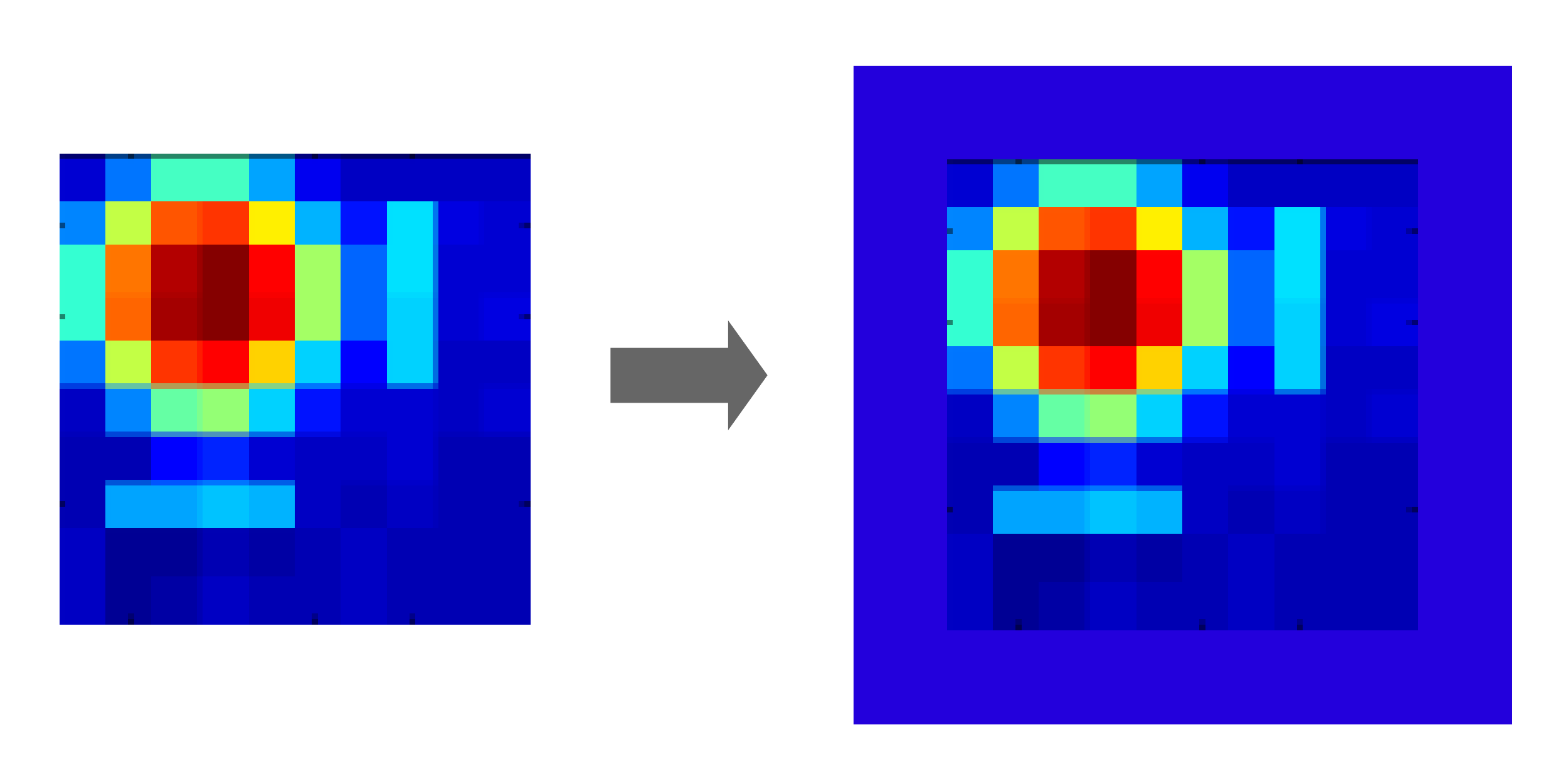}
\vspace{-.1in}
\caption{{\bf Zero Padding a Signed Shift Truncation} The original signed shift truncation (left) and the corresponding zero padded one (right).}
\label{fig:zero_pad}
\vspace{-.2in}
\end{figure}

To ensure accurate recovery, it is important to take the effect of $\lambda$ on the function geometry into consideration. A larger $\lambda$ encourages a sparser $\mb x$ and induces a simpler and smoother function landscape, which effectively eliminates undesirable local minima that are not close to any signed shift truncations, as shown in Figure \ref{fig:geo_lambda}. On the other hand, a smaller $\lambda$ emphasizes more on the accurate recovery of the signal, therefore the global minima of \eqref{eqn:obj}  will be closer to the ground truth when $\lambda$ decreases.

\begin{figure}[h!]
\centering
\includegraphics[width=.6\textwidth]{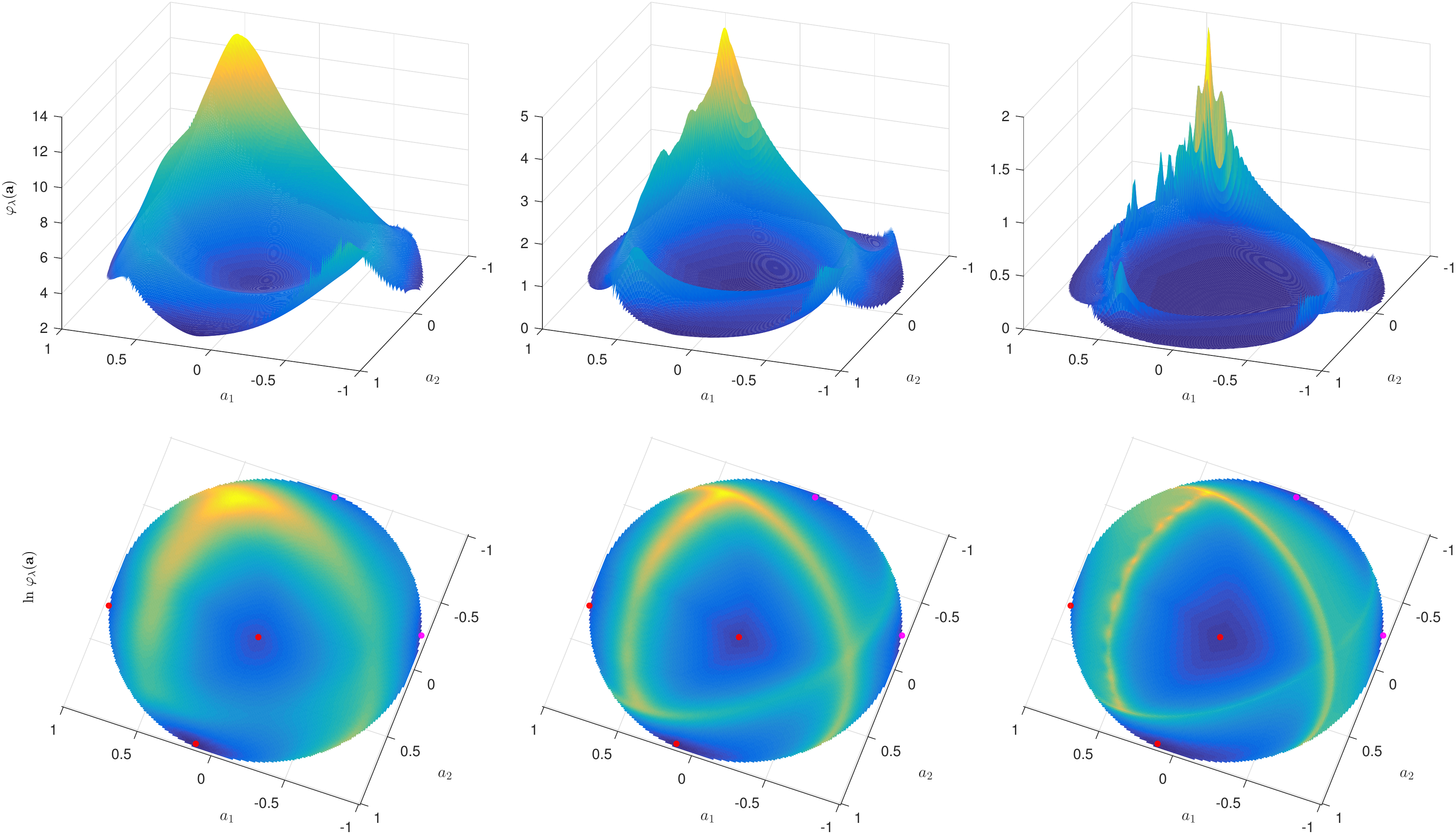}
\vspace{-.05in}
\caption{{\bf Function Geometry with Varying $\bf\lambda$:} The objective $\varphi(\mb a)$ over the hemisphere  for $\lambda=10^{-1}, 10^{-3}, 10^{-6}$. Here $\mb a_0=\proj{[1,8,2]}{\bb S^2}$ and $\mb x_0\sim \text{Ber}(.1)\odot\mc N(0,1)$. The ground truth kernel $\mb a_0$ and its shift-truncations $\proj{[8,2,0]}{\bb S^2}$, $\proj{[0,1,8]}{\bb S^2}$ are shown in red, and sign-flips $\proj{-[8,2,0]}{\bb S^2}$, $\proj{-[0,1,8]}{\bb S^2}$ are shown in magenta. Notice that each signed shift truncation shown on the hemisphere is close to a corresponding local minima, while as the objective landscape becomes less regularized as $\lambda$ shrinks.}
\label{fig:geo_lambda}
\end{figure} 

This geometric effect induced by $\lambda$ suggests a continuation method in the second stage of the algorithm. We start with a relatively big $\lambda$ for smoother function geometry, which encourages the algorithm to converge to one meaningful local minimum  close to some signed shift truncation of the ground truth. Then run the same algorithm with decreasing sequence of $\lambda$ to produce a finer approximation of the ground truth. The overall algorithm is described in Algorithm \ref{alg:nonconvexBD}.

\begin{algorithm}[h!]
\caption{Nonconvex Sparse Blind Deconvolution}
\label{alg:nonconvexBD}
\begin{algorithmic}[1]
\Ensure{Observation data $\mb y$, regularization parameter $\lambda_0$ and $\lambda_{\min}$, continuation parameter $\beta>1$}
\State Solve $\mb a^{(0)}=\arg\min\varphi_{\lambda_0}(\mb a)$ on $\bb S^{k-1}$ with random initialization\\
\State Set $\lambda_1=\lambda_0$, zero pad $\mb a^{(0)}$ to $\mb a^{(1)}$ and $\mb a^{(1)}\in\bb S^{k'-1}$ $(k'>k)$.
\While{$\lambda_k>\lambda_{min}$}
\State Solve $\mb a^{(k+1)}  = \arg\min\varphi_{\lambda_k}(\mb a)$ on $\mathbb S^{k'-1}$ with initialization $\mb a^{(k)}$. \\
\State $\lambda_{k+1} = \lambda_k/{\beta}$\\
\EndWhile
\end{algorithmic}
\end{algorithm}

We need to note that solving $\mb a=\arg\min\varphi_{\lambda}(\mb a)$ in Algorithm \ref{alg:nonconvexBD} involves iteration between (i) finding the marginalization over $\mb x^*(\mb{a})$ step, and (ii) updating $\mb a$ based on the gradient/Hessian of $\varphi_{\lambda}(\mb a)$. This could be very computationally consuming, a more efficient variant would be to optimize over the cross space of $\mb a$ and $\mb x$ together. The corresponding algorithm can be easily adapted to fit into the same general framework. The only things we want to emphasize in the proposed algorithm are the dimension lifting of the sphere and the continuation of $\lambda$.

\section{Further Extensions}

In this section, we extend our algorithm to handle two other deconvolution problems of practical interests: image deblurring and convolutional dictionary learning. The proposed two stage algorithm can be modified and applied to these more complicated applications.

\subsection{Image Deblurring}
{\em Image deblurring} aims to recover a sharp natural image from its blurred observation due to unknown photographic processes such as camera shake or defocus. Although the natural images are not necessarily sparse, it is widely acknowledged that their gradients are approximately sparse.  Let $\mb y=\mb a_0\cconv\mb x_0$ denote the observed blurry image, which is the convolution of the original sharp image $\mb x_0$ and the blurring kernel $\mb a_0$. Because of the linearity of the convolution operator, the gradient of the blurred image equals the convolution of the kernel and gradient of the original sharp image, which is usually sparse as desired
\begin{eqnarray}
\nabla_x\mb y=\mb a_0\cconv\nabla_x\mb x_0, \quad 
\nabla_y\mb y = \mb a_0\cconv\nabla_y\mb x_0.
\end{eqnarray}
Here, $\nabla_x$ and $\nabla_y$ denote derivatives in the $x$ and $y$ directions. In this application, $\nabla_x\mb x_0$ and $\nabla_y\mb x_0$ are the underlying sparse signals, and the blind image deblurring problem can be cast as solving:
\begin{eqnarray}
\min_{\mb a\in\bb S_{+}^{k-1},\mb x_1,\mb x_2}&\Bigl\{&\tfrac12\|\nabla_x\mb y-\mb a\cconv\mb x_1\|_2^2+\lambda r(\mb x_1)\\
&+&\tfrac12\|\nabla_y\mb y-\mb a\cconv\mb x_2\|_2^2+\lambda r(\mb x_2) \Bigr\}.\nonumber 
\label{eqn:bid_obj}
\end{eqnarray}
Here, $\bb S_{+}^{k-1}$ denotes the intersection of the unit sphere and the positive orthant. In this application, the non-negativity of the blurring kernel removes the sign ambiguity. We observe in experiments that local minimizers are all near some shift truncation of the ground truth kernel. The same two stage algorithm can therefore be applied to infer the ground truth.

\subsection{Convolutional Dictionary Learning}
{\em Convolutional dictionary learning} (CDL) is an important problem in machine learning for images, speeches, as well as scientific problems like microscopy data analysis and neural spike sorting. The observation signal $\mb y$ is the superposition of convolutions of $N$ pairs of kernels $\mb a_{0n}$ and corresponding coefficients $\mb x_{0n}$:
\begin{equation}
\label{eqn:multiBD}
\mb y=\textstyle{\sum_{n=1}^N}\mb a_{0n}\cconv\mb x_{0n}.
\end{equation}
Blind deconvolution can be seen as a special case of CDL with $N=1$. If the coefficients $\mb x_{0n}$ are sparse, a natural way to extend our knowledge of SBD would be to assume all $N$ convolutional kernels having unit Frobenius norm and cast it as minimizing following objective function over the product of $N$ spheres:
\begin{equation}
\label{eqn:multiBD_obj}
\min_{\mb a_n\in\bb S^{k-1}}\min_{\mb x_n}\tfrac12\|\mb y-\textstyle{\sum_{n=1}^N}\mb a_n\cconv\mb x_n\|_2^2+\lambda \textstyle{\sum_{n=1}^N}r(\mb x_n).
\end{equation}
We anticipate that all the local minima are near signed shift truncations of the ground truth, provided the target kernels $\mb a_{0n}$ are sufficiently diverse. The modified two stage algorithm  still manages to capture the partial information offered by local minima and hence recovers the ground truth. Experimental results are provided in Section \ref{subsec:CDL-exp} to corroborate this claim.

\section{Experiments}
In this section, we investigate the performance of our algorithm on both synthetic and real data. We first report a systematic investigation, performed in \cite{Cheung17-Nature}, of the performance of our algorithm on synthetic data, which are designed to mimic properties of the microscopy data analysis problem. In Sections \ref{subsec:microscopy-exp}-\ref{subsec:CDL-exp}, we present experiment results showing how our method performs on real data from microscopy and image deblurring. 
\subsection{Evaluation on Synthetic Data}
\noindent {\bf Noise-free data}: we generate the noise-free observation signal of size $m=256\times256$ through circular convolution between a kernel of size $k$ and a random underlying activation signal with a Bernoulli distribution with sparsity $\theta$, i.e. $x_i \overset{\text{i.i.d.}}{\sim} \text{Ber}(\theta)$, or $x\sim \text{Ber}(\theta)$. We plot the kernel recovery error for varying kernel size $k$ and sparsity level $\theta$ in the left of Figure \ref{fig:phase_transition} \cite{Cheung17-Nature}. Each point on the diagram is the average of 20 independent measurements. The algorithm performs excellently in the blue regions, but begins to fail in the red regions, where either the kernel size is large or the underlying activation signal is dense. The region where typical STM measurements are performed are bounded below by the white dashed line, where the proposed algorithm achieves satisfying performance. 

\noindent {\bf Noisy data}: we generate convolutional signals by convolving fixed kernel of dimension $k$, $k/m = 0.14$ with the random activation map $x\sim \text{Ber}(\theta)$ of dimension $m$, and applying additive Gaussian noise. We test the performance of our algorithm for varying sparsity $\theta$ and noise power. The result is shown in Figure \ref{fig:phase_transition} (right): the algorithm achieves noise-robust recovery when the sparsity constraint is satisfied.
\begin{figure}[h!]
\centering
\includegraphics[width=0.8\textwidth]{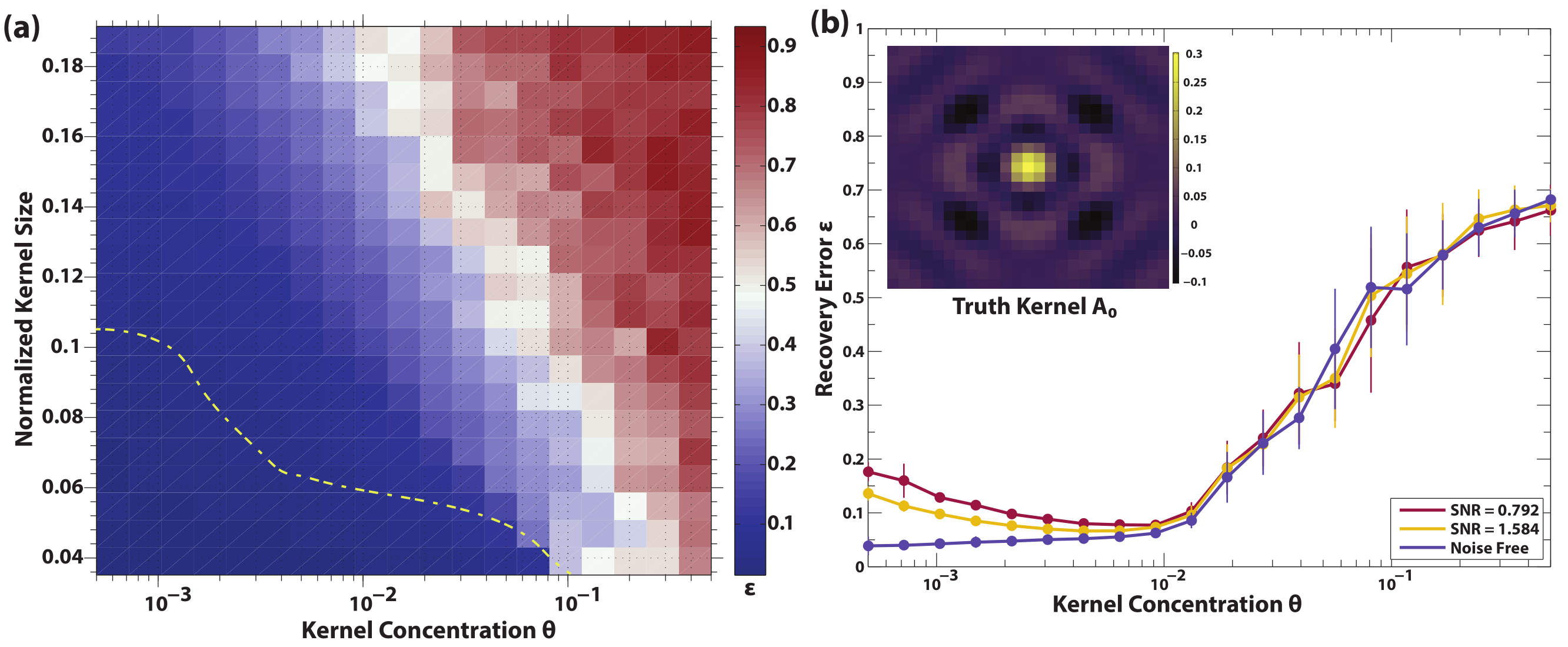}
\vspace{-.1in}
\caption{{\bf Recovery accuracy \cite{Cheung17-Nature}.} Left: phase transition diagram from noise-free simulated results. Right: performance of algorithm \ref{alg:nonconvexBD} in the presence of additive noise in the measurement; the error increases for small $\theta$ due to a lack of samples, whereas extremely large $\theta$ leads to algorithmic failure.}
\label{fig:phase_transition}
\vspace{-.15in}
\end{figure}

\subsection{Microscopy Data Analysis} \label{subsec:microscopy-exp}
We apply our algorithm on experimental microscopy data obtained from a NaFeCoAs sample. Our results shown in Figure \ref{fig:stm} indicate that the proposed algorithm manages to recover the missing details of the ripples in the Fourier domain of the defect, which encode the physical scattering processes of electrons at work.

\begin{figure}[h!]
\centering
\includegraphics[width=0.7\textwidth]{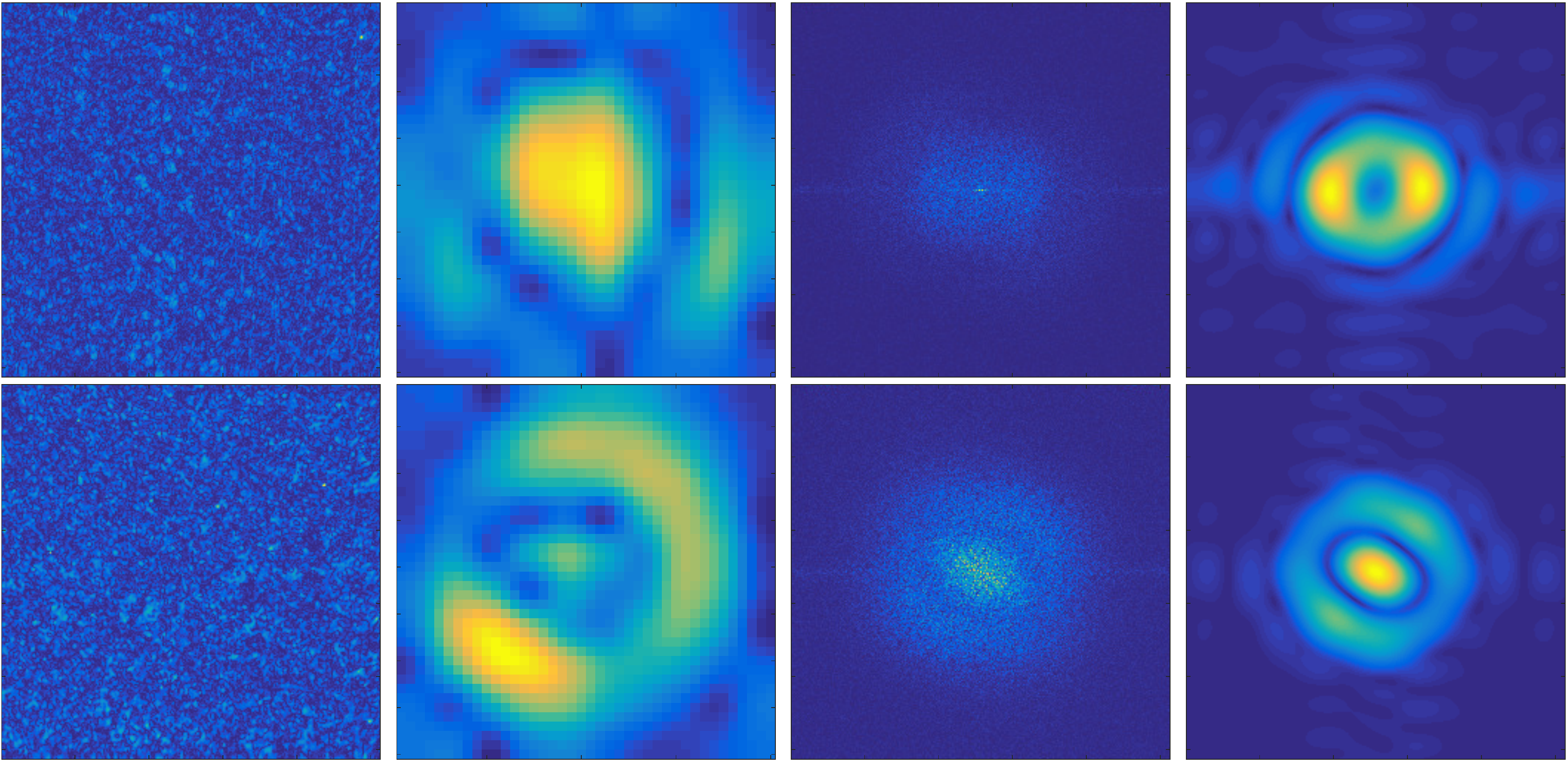}
\caption{{\bf STM Data Analysis.} From left to right: the microscopy images, extracted convolutional kernels (defect patterns), and their respective Fourier magnitude images. 
}
\label{fig:stm}
\vspace{-.2in}
\end{figure}

\subsection{Image Deblurring} 
\label{subsec:deblurring-exp}
We test our algorithm on the image deblurring dataset from \cite{Levin2011-PAMI}, solving (\ref{eqn:bid_obj}) to recover the convolutional kernel. To clearly separate the inaccuracy of the algorithm and the universal blurring kernel model, all the experiments are done on three kinds of blurred images: (i) synthetic blurred images generated by the convolution of sharp images and blurry kernels; (ii) noisy blurred images generated by adding Gaussian noise to the clean synthetic blurred images (SNR=100); and (iii) real blurry images taken with camera shakes \cite{Levin2011-PAMI}. 

We compare with algorithms by Zhang et al.\cite{Zhang2013-CVPR}, Krishnan et al.\cite{Krishnan2011-CVPR}, Sun et al.\cite{Sun2013-ICCP}, and Liu et al.\cite{Liu2014-TIP}.\footnote{We use the default parameters for these algorithms. It's possible that better performance could be obtained by tuning the parameters more carefully. In our algorithm, we fix the $\lambda$'s to be $0.1, 0.01, 0.001, 0.001$ for all the instances.} 

Because of the shift ambiguity, we evaluate the accuracy of the recovered blurring kernel considering all possible shifts. The kernel recovery error is defined as $\min_{\tau}\norm{\injector^*\shift{\extend{\mb a}}{\tau}/\norm{\mb a}{1}-\mb a_0/\norm{\mb a_0}{1}}{F}$, and the cumulative distribution is shown in Figure \ref{fig:bid_hist_kernel}.

\begin{figure}[h!]	
\centering
\hspace{-.2in}
\begin{minipage}{0.3\textwidth}
\includegraphics[width=1\textwidth]{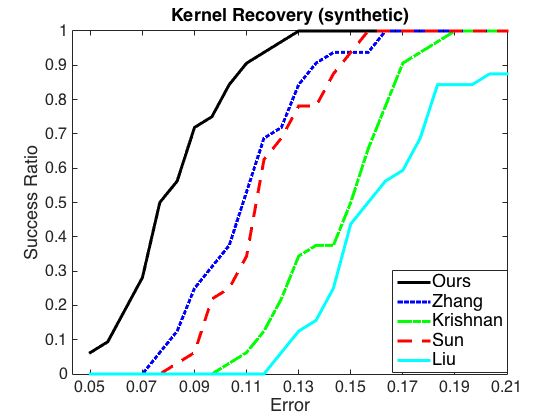}
\end{minipage}
\begin{minipage}{0.3\textwidth}
\includegraphics[width=1\textwidth]{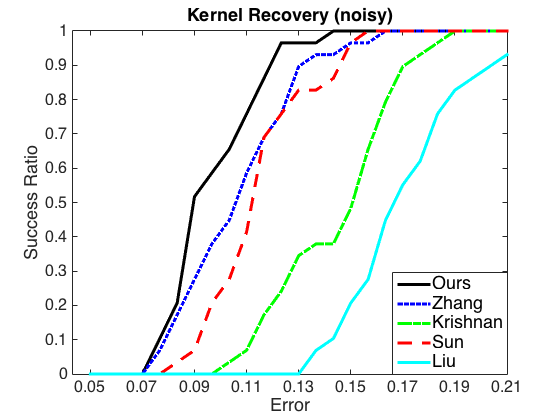}
\end{minipage}
\begin{minipage}{0.3\textwidth}
\includegraphics[width=1\textwidth]{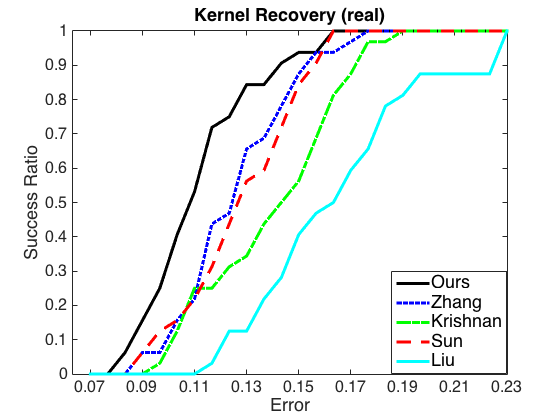}
\end{minipage}
\vspace{-.1in}
\caption{{\bf Blur Kernel Recovery Error}: Cumulative distributions of recovered blur kernel error from synthetic (left), noised(middle) and real (right) blurry images.}
\label{fig:bid_hist_kernel}
\end{figure} 

We use the same non-blind deblurring algorithm from \cite{Krishnan2009-NIPS}, with the same parameter. We consider the blurred image using the ground truth kernel to be the bench mark, and evaluate the quality of the deblurred image by calculating the Frobenius norm of its difference to such bench mark. Results are shown are in Figure \ref{fig:bid_hist_deblur}.
\begin{figure}[h!]	
\centering
\hspace{-.2in}
\begin{minipage}{0.3\textwidth}
\includegraphics[width=1\textwidth]{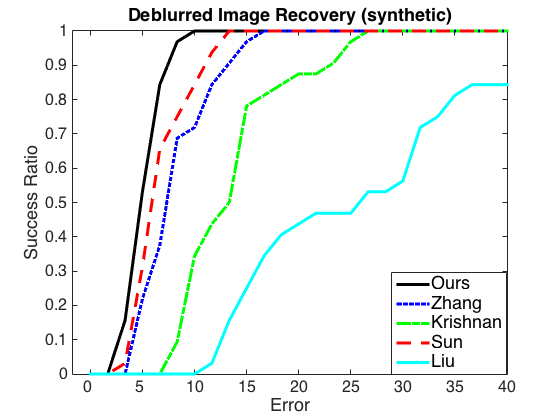}
\end{minipage}
\begin{minipage}{0.3\textwidth}
\includegraphics[width=1\textwidth]{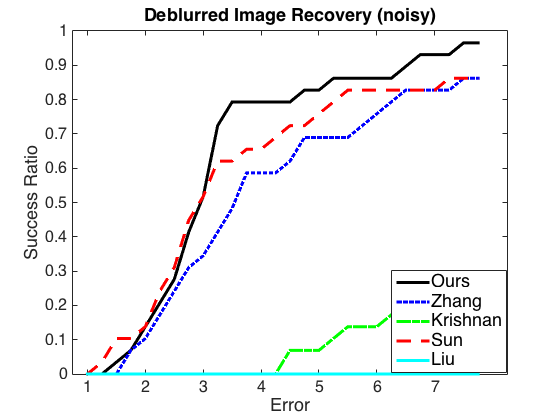}
\end{minipage}
\begin{minipage}{0.3\textwidth}
\includegraphics[width=1\textwidth]{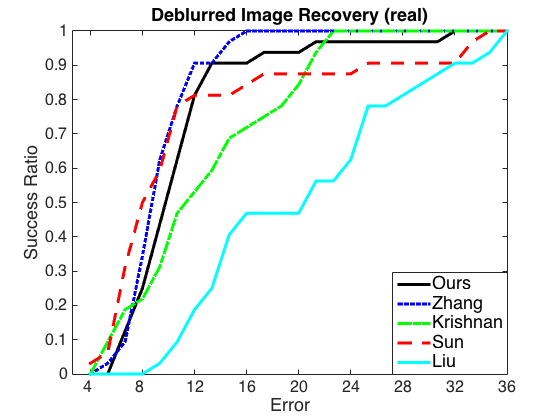}
\end{minipage}
\vspace{-.1in}
\caption{{\bf Non-blind Restoration Error}: Cumulative distributions of deblurred image error from synthetic (left), noised(middle) and real (right) blurry images.}
\label{fig:bid_hist_deblur}
\vspace{-.1in}
\end{figure}

Our algorithm achieves better convolutional kernel recovery for all three types of images, but its improvement on deblurred image is less obvious, especially for real images. This could be due to (i) the convolutional kernel in this dataset is not strictly uniform across the image, and (ii) the non-blind deconvolution algorithm exploits the heavy-tailed distribution of a natural image's gradient and becomes less sensitive to the accuracy of the recovered convolutional kernel.

\subsection{Convolutional Dictionary Learning} 
\label{subsec:CDL-exp}
We show results of recovering multiple convolutional kernels on both synthetic data (Figure \ref{fig:syn_multiBD}) and real STM data (Figure \ref{fig:real_multiBD}).
In the synthetic data, the three convolutional kernels are of size $16\times16$ and their corresponding activation signals are generated through a Bernoulli model of sparsity $0.005$. Results of both stages of the algorithm are shown in Figure \ref{fig:syn_multiBD}: the first stage returns kernels close to some shift truncations of the ground truth, and the second stage recovers the ground truth on a higher dimensional space.

\begin{figure}[h!]
\centering
\begin{minipage}{2in}
\includegraphics[width=2in,trim={.3in .2in 0 0},clip]{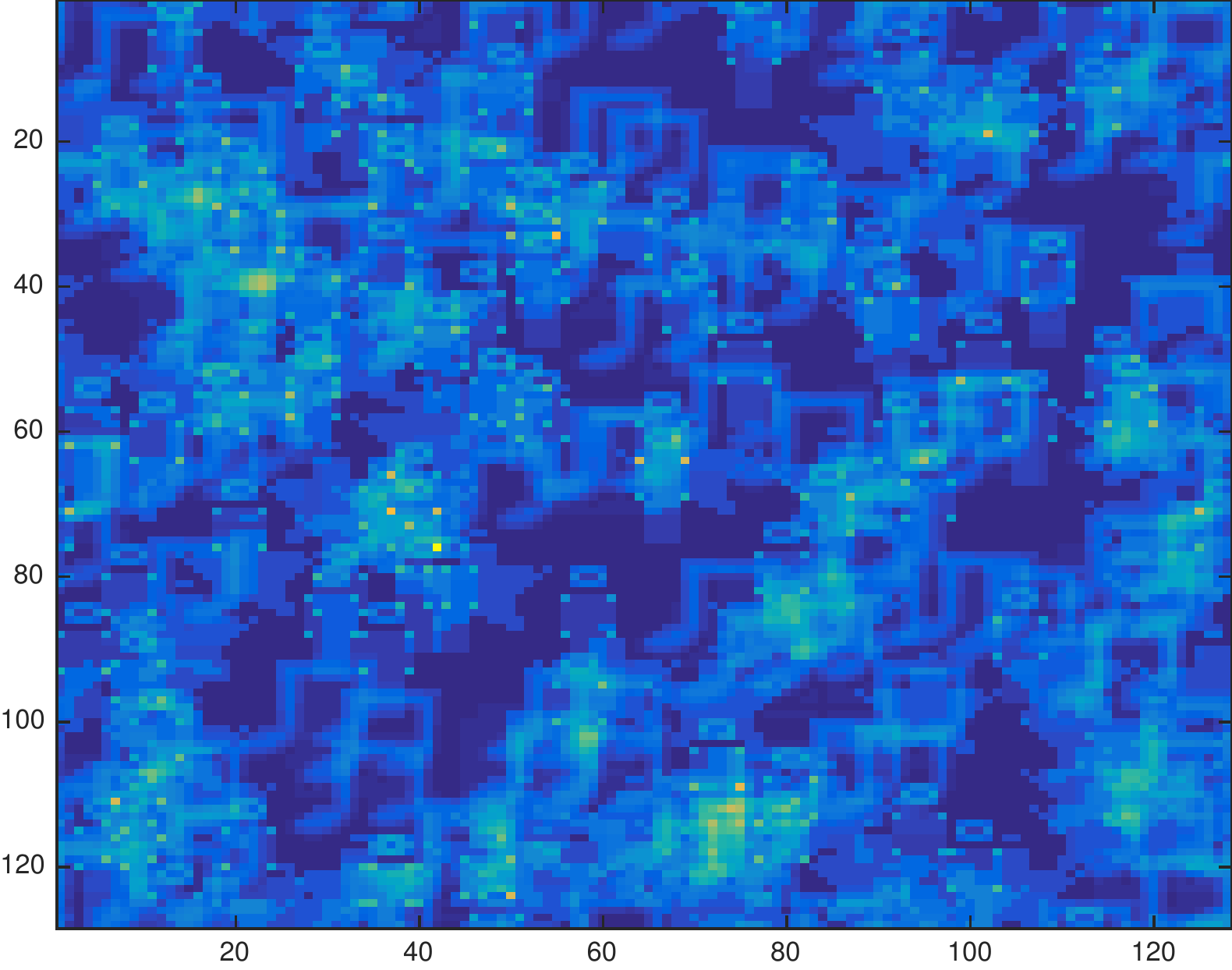}
\end{minipage}
\begin{minipage}{2.4in}
\includegraphics[width=2.45in,trim={.3in 2.45in 0in .6in},clip]{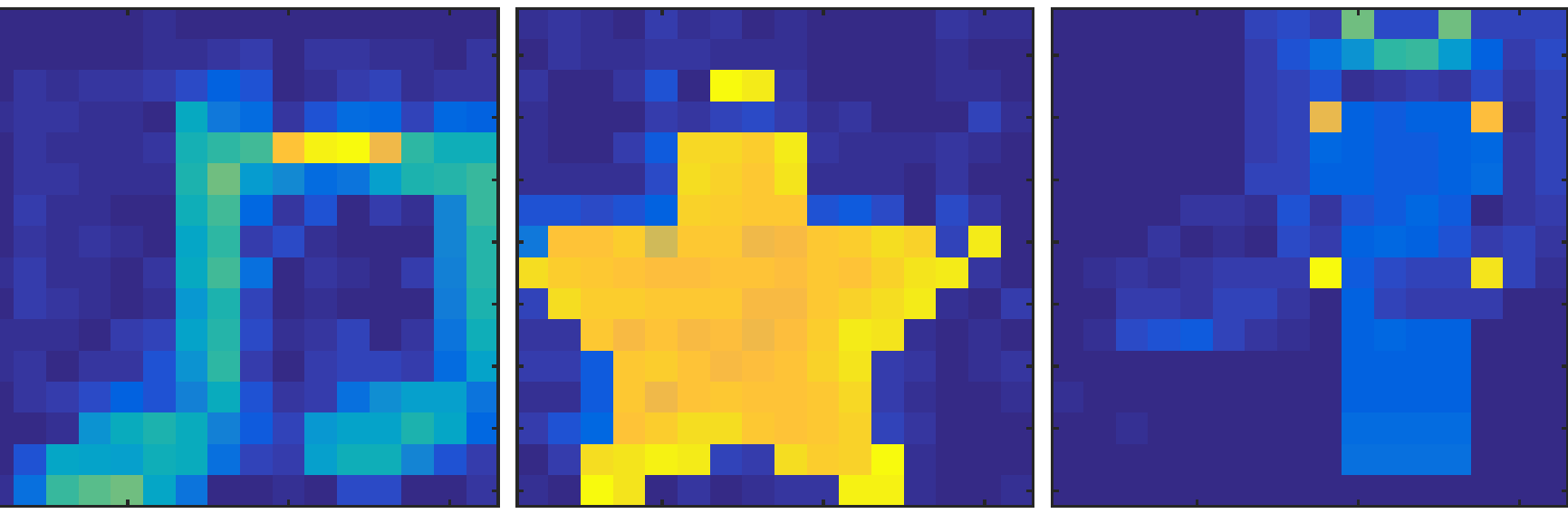}
\includegraphics[width=2.45in,trim={.3in 2.45in 0in .6in},clip]{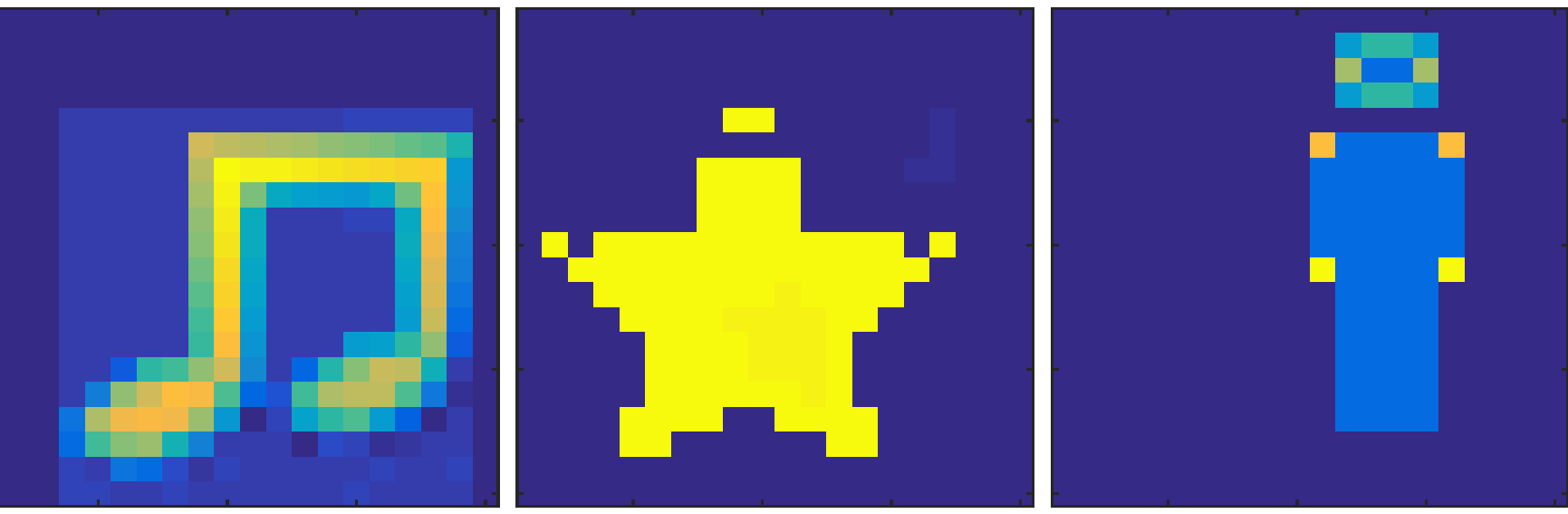}
\end{minipage}
\vspace{-.05in}
\caption{{\bf Multi Kernel Blind Deconvolution on Synthetic Data}: Input image (left) and the recovered convolutional kernels of Stage I and Stage II of the algorithm (right). }
\label{fig:syn_multiBD}
\end{figure} 

We repeat this experiment with microscopy data obtained from a NaFeAs sample. The algorithm manages to differentiate the two convolutional kernels (defect patterns), as shown in Figure \ref{fig:real_multiBD}.  For this material, the kernel orientations depend on the history of the material (stress, temperature, etc.), and using convolutional dictionary learning can be used to automatically detect these features.

\begin{figure}[h!]
\centering
\begin{minipage}{2in}
\includegraphics[width=2in]{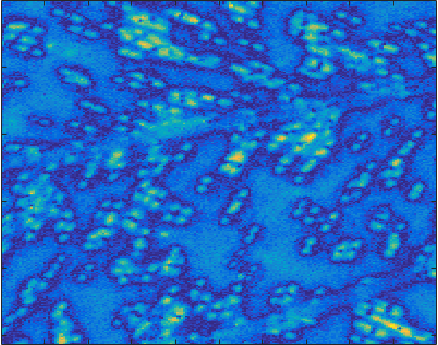} 
\end{minipage}
\begin{minipage}{2.4in}
\includegraphics[width=2.48in]{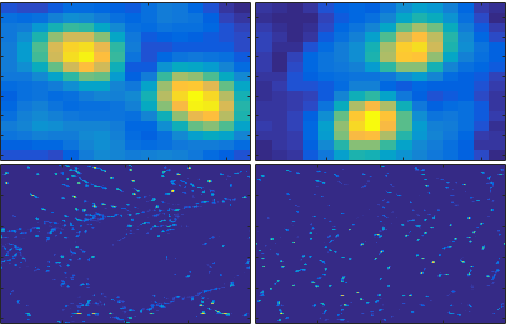}
\end{minipage}
\vspace{-.05in}
\caption{{\bf Multi Kernel Blind Deconvolution on Real STM Image}: Input image (left) and recovered convolutional kernels and their corresponding activation signals (right).}
\label{fig:real_multiBD}
\end{figure}

\section{Generalizations: Matching Loss and Constraints}

In the Lasso-like objective function studied in this paper,  the approximation error of $\mb y-\mb a\cconv\mb x$ are measured in the squared Frobenius norm (or entry-wise $\ell^2$ norm), which is usually adopted to penalize gaussian addictive noise in the observation. To obtain reliable solutions, a spherical constraint on the short kernel turned out to be crucial. Moreover, several theoretical papers \cite{Zhang18-IT, Kuo18-DQ} studying variations of the Lasso-like objective function also adopt the unit Frobenius constraint, and show that the claimed geometry continues to hold under much more general conditions. However, noise could be more complicated and does not always satisfy the gaussian model in real applications. If given prior knowledge of the noise, other penalties for the loss term $\mb y-\mb a\cconv\mb x$ could be preferable. For example, $\|\mb y-\mb a\cconv\mb x\|_p^p$ with $p<2$ performs better to penalize heavy tailed noise, and $\|\mb y-\mb a\cconv\mb x\|_p^p$ with $p>2$ performs better control on the magnitude of the noise.

In this section, we briefly discuss the following more general formulation for the short-and-sparse deconvolution problem
\begin{align}
\label{eqn:obj_pq}
&\min\quad\psi_{p}(\mb a,\mb x)\doteq\tfrac{1}{p}\norm{\mb y-\mb a\cconv\mb x}p^p+\lambda r\norm{\cdot}1\\
&\st\quad\norm{\mb a}q=1.\nonumber
\end{align}
Similarly, we write $\varphi_p(\mb a)\doteq \min_{\mb x} \psi_p(\mb a,\mb x)$. In this case, we use entrywise $\ell^p$ norm to measure the approximation error of $\mb a\cconv\mb x$, and assume the optimization constraint for $\mb a$ to be a unit $\ell^q$ ($2<q<\infty$) sphere. Comprehensive study of the function landscape of above general formulation is even more challenging. However, we can demonstrate that once $p=q$, there exist local solutions that share the same geometric property (a local optimum is close to some scaled shifted truncation) through local analysis. As in the objective optimization problem (\ref{eqn:obj}), the inherent shift ambiguity leads to these structured local minimizers. 
Because of the randomness in $\mb x_0$, the derived results hold for more general $\mb x_0$.
\begin{lemma}
Suppose $\mb y=\mb a_0\cconv\mb x_0$ with $\mb x_0=\mb e_0$, and $p=q\ge 2$. Then for any shift $\tau$, positive scalar $\eps$ and $\lambda$ such that at every $\mb a\in \bb S_q\cap\bb B\paren{\frac{ \injector^*\shift{\mb a_0}{-\tau} }{\norm{ \injector^*\shift{\mb a_0}{-\tau} }q},\eps}$ the solution $\mb x^*(\mb a)\doteq \arg\min_{\mb x} \psi_p(\mb a,\mb x)$ is (i) unique and (ii) is supported on the $\tau$-th entry, the point $\bar{\mb a}\doteq\frac{ \injector^*\shift{\mb a_0}{-\tau} }{\norm{ \injector^*\shift{\mb a_0}{-\tau} }q}$ is a strict local minimizer of the cost $\varphi_p(\mb a) $ over the manifold $\norm{\mb a}p=1$.
\end{lemma}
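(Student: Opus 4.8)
The plan is to collapse the marginalized objective to a single one–dimensional convex analysis, using the single–spike hypothesis to eliminate both $\mb x$ and the non-smoothness of the penalty, and then to read off strict local minimality from strict convexity.

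First I would use $\mb x_0=\mb e_0$, so that $\mb y=\extend{\mb a_0}$, and invoke hypotheses (i)--(ii): on the neighborhood $\mc N:=\bb S_q\cap\bb B(\bar{\mb a},\eps)$ the inner minimizer is $\mb x^*(\mb a)=x(\mb a)\,\mb e_\tau$ for a scalar $x(\mb a)$. Substituting $\mb x=x\,\mb e_\tau$ gives $\mb a\cconv\mb x=x\,\shift{\extend{\mb a}}{\tau}$, which is supported on the length-$k$ window $W=[\tau,\tau+k-1]_m$. Splitting the residual over $W$ and $W^c$ and noting that $\shift{\extend{\mb a}}{\tau}$ vanishes on $W^c$, I get, for every $\mb a\in\mc N$,
\[
\varphi_p(\mb a)=\tfrac1p\norm{(\extend{\mb a_0})_{W^c}}{p}^p+\min_{x\in\R}\Big[\tfrac1p\norm{\mb b-x\mb a}{p}^p+\lambda\abs{x}\Big],\qquad \mb b:=\injector^*\shift{\extend{\mb a_0}}{-\tau},
\]
where the windowed piece of $\mb y$ reindexes exactly to $\mb b$ and $\bar{\mb a}=\mb b/\norm{\mb b}{q}$. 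The truncated energy is constant in $\mb a$, so it suffices to study $G(\mb a):=\min_x[\tfrac1p\norm{\mb b-x\mb a}{p}^p+\lambda\abs{x}]$ on the sphere.

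Next I would lift this scalar optimization into a single convex function on $\R^k$. Since $\norm{\mb a}{q}=1$, setting $\mb v=x\mb a$ turns $\lambda\abs{x}$ into $\lambda\norm{\mb v}{q}$, and as $(x,\mb a)$ range over $\R\times\bb S_q$ the product $\mb v$ sweeps all of $\R^k$, so $G(\mb a)=\min_x h(x\mb a)$ with $h(\mb v):=\tfrac1p\norm{\mb b-\mb v}{p}^p+\lambda\norm{\mb v}{q}$. For $2\le p=q<\infty$ the map $\mb v\mapsto\norm{\mb b-\mb v}{p}^p$ is strictly convex and the norm term is convex, so $h$ is strictly convex with a unique global minimizer $\mb v^*$. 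The crux is to show $\mb v^*=\bar t\,\bar{\mb a}$ is parallel to $\mb b$, which I would do by checking $\nabla h(\mb v^*)=\mb 0$ directly: with $\mb v^*=\bar t\,\mb b/\norm{\mb b}{q}$ the residual $\mb b-\mb v^*$ is a positive multiple of $\mb b$, so the loss gradient is proportional to $(\abs{b_i}^{p-1}\sign(b_i))_i$, while the penalty gradient $\nabla\norm{\cdot}{q}$ at $\mb v^*$ is proportional to $(\abs{b_i}^{q-1}\sign(b_i))_i$. Precisely because $p=q$ these two vectors are parallel and cancel at the thresholding value $\bar t=\norm{\mb b}{q}-\lambda^{1/(q-1)}$, which is strictly positive by the nonempty-support hypothesis (so $\mb v^*\neq\mb 0$ and $h$ is differentiable there). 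Hence $\mb v^*$ is the unique global minimizer of $h$.

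Finally I would translate global minimality of $h$ into strict local minimality of $\bar{\mb a}$. For any $\mb a$ on the sphere the line $\R\mb a$ meets $\mb v^*$ only when $\mb a=\pm\bar{\mb a}$; for all other $\mb a$, strict convexity and uniqueness give $G(\mb a)=\min_x h(x\mb a)>h(\mb v^*)=G(\bar{\mb a})$. Choosing $\eps$ small enough that $-\bar{\mb a}\notin\mc N$, this yields $\varphi_p(\mb a)>\varphi_p(\bar{\mb a})$ for every $\mb a\in\mc N\setminus\{\bar{\mb a}\}$, i.e.\ $\bar{\mb a}$ is a strict local minimizer. I expect the main obstacle to be bookkeeping rather than estimation: carefully tracking the shift/truncation indices to justify the clean split $\varphi_p=\mathrm{const}+G$ and the reindexing of $(\extend{\mb a_0})_W$ to $\mb b$, and confirming that the single-support hypothesis genuinely forces $\varphi_p$ (a minimization over all $\mb x$) to coincide with the single-spike value on $\mc N$. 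The only place requiring real structure is the gradient cancellation, which is exactly where the matching $p=q$ is indispensable; everything else follows from convexity.
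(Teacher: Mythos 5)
Your proof is correct, and it takes a genuinely different route from the paper's. The paper works directly on the manifold $\norm{\mb a}{q}=1$: it derives the scalar minimizer $\bar\alpha=\norm{\injector^*\shift{\mb a_0}{-\tau}}{q}-\lambda^{1/(p-1)}$ from the KKT conditions, differentiates $\alpha(\mb a)$ via the implicit function theorem, assembles the Euclidean gradient and Hessian of $\varphi_p$, and then computes the Riemannian gradient and Hessian on the $\ell^q$ sphere (including the curvature correction term) to show that the gradient vanishes and the Hessian is positive definite along tangent directions supported in $\supp(\bar{\mb a})$; perturbations off that support are handled by a separate chain of function-value inequalities. Your argument sidesteps all of this machinery: the same window/complement split appears implicitly in the paper, but your substitution $\mb v=x\mb a$ — which absorbs the constraint precisely because $\lambda\abs{x}=\lambda\norm{x\mb a}{q}$ on the unit $\ell^q$ sphere — converts the marginalized objective into the unconstrained strictly convex function $h(\mb v)=\tfrac1p\norm{\mb b-\mb v}{p}^p+\lambda\norm{\mb v}{q}$, and the matching $p=q$ enters at exactly the same conceptual point as in the paper (alignment of the loss gradient with the normal to the constraint set) but now as a one-line cancellation in $\nabla h(\mb v^*)=\mb 0$. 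What your approach buys is (a) elimination of the Riemannian Hessian computation and the implicit function theorem, and (b) a strictly stronger conclusion: you get $\varphi_p(\mb a)>\varphi_p(\bar{\mb a})$ for \emph{every} $\mb a\neq\pm\bar{\mb a}$ in the neighborhood where the single-spike identification holds, whereas the paper's treatment of the off-support directions $\mb\delta_c$ only yields non-strict inequalities, so strictness there is arguably left open in the paper's own proof. Two small points to make explicit: you need hypothesis (ii) read as \emph{nonempty} support so that $\bar t=\norm{\mb b}{q}-\lambda^{1/(q-1)}>0$ (the paper makes the same implicit assumption when it writes $\bar\alpha>0$), and you should note that for $q\ge 2$ the map $\mb v\mapsto\norm{\mb v}{q}$ is $C^1$ away from the origin so that the gradient cancellation is legitimate even at coordinates where $b_i=0$.
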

\begin{proof} 
Please refer to the supplement.
\end{proof}

This lemma says that {\em as long as the constraint matches the loss}, i.e., we choose $q = p \ge 2$, a scaled shift truncation $\bar{\mb a} = \injector^*\shift{\mb a_0}{-\tau}/\norm{\injector^*\shift{\mb a_0}{-\tau}}{q}$ achieves the local minimum. In contrast, {\em when the constraint does not match the loss, i.e., $q\neq p$, this property is not satisfied}: the Riemannian gradient vanishes at a stationary point $\mb a$, which satisfies
\begin{equation}
\label{eqn:p_q_lcoal}
\sign\paren{\injector^*\shift{\extend{\mb a_0}}{-\tau}-\alpha\mb a} = \sign\paren{\mb a},
\end{equation}
and
\begin{equation}
\paren{\injector^*\shift{\extend{\mb a_0}}{-\tau}-\alpha\mb a}^{\circ\paren{p-1}} = \alpha'\mb a^{\circ\paren{q-1}},
\end{equation}
with $\alpha'$ denoting another non-zero scalar of arbitrary value. This point is not a shift truncation of the ground truth.\footnote{When $p\neq q$, with $\lambda$ decreasing, \eqref{eqn:p_q_lcoal} can be rewritten as $\alpha\mb a = \injector^*\shift{\extend{\mb a_0}}{-\tau}+\alpha'\mb a^{\circ\frac{q-1}{p-1}}$. With the stationary condition for $\mb x^*(\mb a)$, we have $\alpha\mb a \to \injector^*\shift{\extend{\mb a_0}}{-\tau}$ as $\lambda\to 0$. Algorithmically, this helps to explain why practical algorithms solving \eqref{eqn:obj_pq} with $p\neq q$ can sometimes recover the kernel, where a careful continuation in $\lambda$ is always necessary.}

Hence, when the constraint matches the loss, there is good {\em local} geometry. In this paper, we have empirically demonstrated that when $p = q =2$, there is good {\em global} geometry, under appropriate conditions. We can potentially leverage this property to find global solutions to the general $\ell^p$ ($p \ge 2$) problem: one first minimizes $\ell^2$ to obtain a point near the ground truth and then locally minimizes $\varphi_p$ over an $\ell^p$ ball to obtain an estimate that uses the statistical characteristics of the noise. 

Although our analysis only pertains to $q \ge 2$, we believe that there is good local geometry even when $q<2$. In this situation, the constraint set is no longer twice differentiable and the shift truncations are always on the non-smooth region of the constraint set. As the $p$ decrease, the normal cone at the nonsmooth points gets wider. For example, if $q\ge 2$ the normal cone is the normal direction. Therefore, as $q$ decreases, the normal cone is larger and hence more like to contain the gradient direction, then this shift truncation is likely to be a stationary point.

\section{Discussions}
This work studies the global geometry of a nonconvex optimization problem for SBD when the kernel is assumed to have unit Frobenius norm. In this setting, we find that all the local minima are benign, in the sense that they are close to some signed shift truncation of the ground truth. With this insight, we propose a two stage algorithm that recovers the ground truth by exploiting the information hidden in local minima. 

This problem reveals the challenges faced when analyzing the SBD problem via a geometrical approach. For problems enjoying stronger symmetry properties \cite{SQW15-pp, SQW16-pp}, similar approaches yield a global understanding of the function geometry and recovery guarantees. We expect that the weak symmetries in SBD make a major contribution to the difficulties encountered for this problem.

There are lots of additional further directions could be of great interests for both theory and application:
Our empirical results show that the our characterization of local minima carries through to the convolutional dictionary learning problem, which can also be efficiently solved by slight adaptation of the proposed algorithm. However, the theory part is open, it would be interesting to know how many kinds of kernels, or what kinds of kernels are recoverable, probably by some measures of incoherence, which is a common assumption in dictionary learning problem.
 
Two other imperfections we encounter in scientific measurement are resolution limit and measurement error, which inspire us to consider (i) if it's possible to integrate blind deconvolution and super-resolution process together; (ii) if we can come up with a robust blind deconvolution algorithm to automatically rule out noisy entries.

\section*{Acknowledgement}
The authors gratefully acknowledge support from NSF 1343282, NSF CCF 1527809, and NSF IIS 1546411. 

{\small
\bibliographystyle{alpha}
\bibliography{deconv}
}

\newpage
\appendix
\section{Notations}
This supplementary material contains a complete proof of Theorem 2.1 and Lemma 3.1 of the main paper. Section 1 discusses some technical notations and  geometric observations, Section 2 and 3 presents a complete proof for the theorem and lemma respectively, and Section 4 contains two auxiliary lemmas used in the main proof.

Before commencing our main proof, we introduce some notations, recap the simplified objective function $\phihat$ and a few basic observations on its geometry. In this note, we use $\convmtx{\mb a}$ or $\convmtx{\extend{\mb a}}$ exchangeably to denote the $m\times m$ circulant matrix generated by a $k$-length short convolutional kernel $\mb a$ without ambiguity.

\textbf{Reversed Circulant Matrix}
Let $\revmtx$ be the reversed circulant matrix for $\mb a_0$ such that
\begin{equation} \revmtx = \left[ \begin{array}{c|c|c|c|c} \shift{\extend{\mb a_0}}{0} & \shift{\extend{\mb a_0}}{-1} & \shift{\extend{\mb a_0}}{-2} & \dots & \shift{\extend{\mb a_0}}{-(m-1)} \end{array} \right] \in \R^{m \times m},
\end{equation}
Here, $\shift{\extend{\mb a_0}}{\tau}$ denotes a cyclic shift as defined in Equation (3) of the main paper.
Since the $i$-th entry of $\convmtx{\mb a}^* \extend{\mb a_0}$ satisfies
\begin{equation}
\left[ \convmtx{\mb a}^* \extend{\mb a_0} \right]_i =\innerprod{ \shift{\extend{\mb a} }{i}  }{ \extend{\mb a_0} }
=\innerprod{ \extend{\mb a} }{ \shift{ \extend{\mb a_0 } }{-i} } 
=\innerprod{ \mb a }{ \injector^* \shift{\extend{\mb a_0}}{-i} },
\end{equation}
therefore following equation always holds
\begin{equation}
\convmtx{\mb a}^* \extend{\mb a_0} = \revmtx^* \extend{\mb a}.
\end{equation}

\textbf{Projection onto $I$}
Let $\mb e_0, \dots, \mb e_{m-1}$ denote the standard basis vectors. For index set $I = \set{ i_1, \dots, i_{|I|} } \ne \emptyset$, let
\begin{equation}
\mb V_I \doteq \left[ \mb e_{i_1} \mid \mb e_{i_2} \mid \dots \mid \mb e_{i_{|I|}} \right] \in \R^{m \times |I|},
\end{equation}
and then
\begin{equation}
\mb P_I = \mb V_I \mb V_I^* 
\end{equation}
denote the projection onto the coordinates indexed by $I$ while setting other entries zero. 

\textbf{Partial Signed Support} 
Let $\mb u$, $\mb v$ be two vectors of the same dimension. If $\supp{\mb u}\subseteq\supp{\mb v}$ and $\mb u(i)\mb v(i)\ge0$ for all $i\in\supp{\mb v}$, then $\mb u$ attains the partial signed support of $\mb v$, denoted as $\mb u\unlhd\mb v$.

\textbf{Piecewise Quadratic Function}
Let $\mb x^*(\mb a)$ denote the minimizer for simplified objective function
\begin{equation}
\phihat(\mb a) =  \min_{\mb x} \tfrac{1}{2}\norm{\extend{\mb a_0}}{2}^2 +  \tfrac{1}{2} \norm{\mb x}{2}^2 - \innerprod{\mb a\cconv\mb x }{\extend{\mb a_0}} + \lambda\norm{\mb x}1,
\end{equation}
with sign $\mb \sigma$ and support $I$ defined as
\begin{equation}
\mb \sigma \doteq \sign{ \mb x^*(\mb a) } \in \set{ -1, 0, 1 }^m, \quad I \doteq \supp{\mb \sigma}\subseteq \set{ 0, 1, \dots, m-1}\footnote{As in the main paper, we assume that $m$ dimensional vectors are indexed by the integers $0,1,\dots, m-1$.}. 
\end{equation}

By stationary condition for $\mb x^*(\mb a)$, we obtain
\begin{equation}
\mb x^*(\mb a)=\soft{\convmtx{\mb a}^* \extend{\mb a_0}}{\lambda} = \soft{\revmtx^* \injector \mb a}{\lambda},
\end{equation}
where $\soft{u}{\lambda} = \sign(u) \max\set{ |u| - \lambda, 0 }$ is the entry-wise soft-thresholding operator. 

For each sign pattern $\mb \sigma =\supp{\mb x^*}\in \set{ -1, 0, 1 }^m$, there exists corresponding region on the sphere such that 
\begin{equation}
R_{\mb \sigma} = \set{ \mb a \mid \sign{\soft{\revmtx^* \injector \mb a }{\lambda} } = \mb \sigma }.
\end{equation}
On the relative interior of each $R_{\mb \sigma}$, the function $\phihat$ has a simple expression:
\begin{equation}
\phihat(\mb a) = \phihat_{\mb \sigma}(\mb a) \;\doteq\; - \tfrac{1}{2} \mb a^* \injector^* \revmtx \mb P_I \revmtx^* \injector \mb a + \lambda \mb \sigma^* \mb P_I \revmtx^* \injector \mb a+\tfrac{1}{2} - \tfrac{\lambda^2 |I|}{2} .
\end{equation}
Therefore, the objective function is piecewise quadratic and can be rewritten as 
\begin{equation}
\phihat_{\mb \sigma} = \tfrac{1}{2} \mb a^* \mb M_{\mb \sigma} \mb a + \mb b_{\mb \sigma}^* \mb a + Const_{\mb \sigma},
\end{equation}
with
\begin{equation}
\mb M_{\mb \sigma} =-\injector^* \revmtx \mb P_I \revmtx^*\injector ,\quad  \mb b_{\mb \sigma}=\lambda\injector ^* \revmtx\mb P_I\mb \sigma.
\end{equation}
With above notations clarified, we are now ready to present a proof for the main theorem.

\section{Proof of Theorem 2.1}
\begin{proof} On the relative interior of each $R_{\mb \sigma}$, the simplified objective function
\begin{equation}
\phihat_{\mb \sigma} = \tfrac{1}{2} \mb a^* \mb M_{\mb \sigma} \mb a + \mb b_{\mb \sigma}^* \mb a + c_{\mb \sigma}
\end{equation}
has Euclidean derivative and Hessian
\begin{eqnarray}
\nabla \phihat_{\mb \sigma}(\mb a ) &=& \mb M_{\mb \sigma} \mb a + \mb b_{\mb \sigma}, \\
\nabla^2 \phihat_{\mb \sigma}(\mb a) &=& \mb M_{\mb \sigma}.
\end{eqnarray}
As we assume $\mb a$ to have unit Frobenius norm, or to live on a sphere, the more natural Riemannian gradient and Hessian are defined as
\begin{eqnarray}
\grad{\phihat_{\mb \sigma}}{\mb a} 
&=& \mb P_{\mb a^\perp}\nabla \phihat_{\mb \sigma}(\mb a ) \\
&=& \mb M_{\mb \sigma} \mb a + \mb b_{\mb \sigma} - \mb a (\mb a^* \mb M_{\mb \sigma} \mb a + \mb b_{\mb\sigma}^* \mb a),\\
\hess{\phihat_{\mb \sigma}}{\mb a} 
&=&\mb P_{\mb a^\perp} \Bigl(\nabla ^2\phihat_{\mb \sigma}(\mb a ) -\innerprod{\nabla \phihat_{\mb \sigma}(\mb a )}{\mb a}\mb I\Bigr) \mb P_{\mb a^\perp}\\
&=&\mb P_{\mb a^\perp} \Bigl( \mb M_{\mb \sigma} - (\mb a^* \mb M_{\mb \sigma} \mb a + \mb b_{\mb \sigma}^* \mb a  ) \mb I \Bigr) \mb P_{\mb a^\perp}.
\end{eqnarray}
Here, $\mb P_{\mb a^\perp}=\mb I-\mb a\mb a^*$ denotes projection onto 
the tangent space over the sphere at $\mb a$. As in the Euclidean space, a stationary point on the sphere needs to satisfy $\grad{\phihat_{\mb \sigma}}{\mb a} =\mb 0$. At a stationary point $\bar{\mb a}$, if $\hess{\phihat_{\mb \sigma}}{\bar{\mb a}} $ is positive semidefinite, the function is convex and $\bar{\mb a}$ is a local minimum; if $\hess{\phihat_{\mb \sigma}}{\bar{\mb a}}$ has a negative eigenvalue, then there exists a direction alone which the objective value decreases and hence $\bar{\mb a}$ is a saddle point \cite{Absil2007}.

Let $I = \Brac{i_1 < i_2 < \dots < i_{|I|}}$ and define
\begin{eqnarray}
\eta_i &=& \norm{ \injector^* \shift{\extend{\mb a_0}}{-i} }{2}\quad \forall i\in I,\\
\mb \eta &=& ( \eta_{i_1}, \eta_{i_2}, \dots, \eta_{i_{|I|}} ) \in \R^{|I|},
\end{eqnarray}
and 
\begin{equation}
\mb U = \left[ \frac{\sigma_{i_1} \injector^* \shift{\extend{\mb a_0}}{-i_1} }{ \eta_{i_1} } \middle| \frac{\sigma_{i_2} \injector^* \shift{\extend{\mb a_0}}{-i_2} }{ \eta_{i_2} } \middle| \dots \middle| \frac{\sigma_{i_{|I|}} \injector^* \shift{\extend{\mb a_0}}{-i_{|I|}} }{ \eta_{i_{|I|}} } \right] \in \R^{k \times |I|}. 
\end{equation}
Here, columns of $\mb U$ have unit $\ell^2$ norm. Then we have
\begin{equation}
\mb M_{\mb \sigma} = - \mb U \mathrm{diag}( \mb \eta )^2 \mb U^*,\quad\mb b_{\mb \sigma} = \lambda \mb U \mb \eta.
\end{equation}

As $I$ is defined via soft thresholding, we have $\abs{ [ \revmtx^* \injector \mb a ]_i } > \lambda$ holds for every $i \in I$. Hence,
\begin{eqnarray}
\mb a^* \mb M_{\mb \sigma} \mb a + \mb b_{\mb \sigma}^* \mb a
&=& -\mb a^* \injector^* \revmtx \mb P_{I} \revmtx^* \injector \mb a  + \lambda \mb \sigma^* \mb P_{I} \revmtx^* \injector \mb a \\
&=& -\norm{\mb P_I \revmtx^* \injector \mb a }{2}^2 + \lambda \norm{ \mb P_I \revmtx^* \injector \mb a }{1} \\
&<& 0
\end{eqnarray}
holds at any $\mb a\in\cl{R_{\mb \sigma}}\setminus R_{\mb 0}$.

\textbf{Stationary point and implications}

Consider any stationary point $\bar{\mb a} \in \cl{ R_{\mb \sigma} } \setminus R_{\mb 0}$ of $\phihat$. By continuity of the gradient of $\phihat$ (proved in Lemma \ref{lem:continuity}), $\bar{\mb a}$ is also a stationary point of $\phihat_{\mb \sigma}$. By definition, $\grad{\phihat_{\mb \sigma}}{\bar{\mb a}} = \mb 0$, which implies that 
\begin{equation}
(\bar{\mb a}^* \mb M_{\mb \sigma} \bar{\mb a} + \mb b_{\mb \sigma}^* \bar{\mb a}) \bar{\mb a} = \mb M_{\mb \sigma} \bar{\mb a} + \mb b_{\mb \sigma}.
\end{equation}
Note that since $\bar{\mb a}^* \mb M_{\mb \sigma} \bar{\mb a} + \mb b_{\mb \sigma}^* \bar{\mb a} \ne 0$ and $\mb b_{\mb \sigma} \in \mathrm{range}(\mb M_{\mb \sigma})$, this implies that $\bar{\mb a} \in \mathrm{range}(\mb M_{\mb \sigma})$.

Let $\gamma = -\paren{ \mb a^* \mb M_{\mb \sigma} \mb a + \mb b_{\mb \sigma}^* \mb a}>0$, then the condition for a stationary point $\bar{\mb a}$ becomes 
\begin{equation}
\gamma \bar{\mb a} = \mb U \mathrm{diag}(\mb \eta)^2 \mb U^* \bar{\mb a} - \lambda \mb U \mb \eta. 
\end{equation}
Let $\mb \alpha = \mb U^* \bar{\mb a}$, and note that for each $j$, $\alpha_j > 0$ and $\alpha_j \eta_j > \lambda$. In terms of $\mb U$, the stationarity condition becomes
\begin{equation}
\gamma \mb \alpha = \mb U^* \mb U \mathrm{diag}(\mb \eta)^2 \mb \alpha - \lambda \mb U^* \mb U \mb \eta. 
\end{equation}
Since the diagonal elements of $\mb U^* \mb U$ are all ones, and hence can be written as
\begin{equation}
\mb U^* \mb U = \mb I + \mb \Delta.
\end{equation}
We have
\begin{equation}
\gamma \mb \alpha = \diag(\mb \eta)^2 \mb \alpha - \lambda \mb \eta + \mb \Delta \Bigl\{ \diag(\mb \eta)^2 \mb \alpha - \lambda \mb \eta \Bigr\}.
\end{equation}
As $\mb\alpha\succ\mb0$ and $\mathrm{diag}(\mb \eta) \mb \alpha\succ\lambda\cdot\mb1$\footnote{Here, $\succ$ denotes element-wise inequality between vectors.}, together with an auxiliary Lemma \ref{lem:orth-perturb} proved in Section \ref{sec:lemmas}, we have
\begin{eqnarray}
\norm{ \mathrm{diag}(\mb \eta)^2 \mb \alpha - \lambda \mb \eta }{2} &\le& \norm{\mb \alpha}{2} \\ &=& \norm{\mb U^* \bar{\mb a}}{2} \\ &\le& \norm{\mb U}{\ell^2 \to \ell^2} \\
&\le& \sqrt{3/2}, 
\end{eqnarray}
whence
\begin{eqnarray}
\norm{ \mb \Delta \Bigl\{ \diag( \mb \eta )^2 \mb \alpha - \lambda \mb \eta \Bigr\} }{ \infty } &\le& \sqrt{3/2} \times \norm{\mb \Delta }{\ell^2 \to \ell^{\infty}}.  \label{eqn:res-bound}
\end{eqnarray}
Suppose that $\mb \Delta$ is small enough that the right hand side of \eqref{eqn:res-bound} is bounded by $\lambda^2 / 2$, i.e., 
\begin{equation} \label{eqn:2-inf-bound}
\norm{\mb \Delta}{\ell^2 \to \ell^\infty} \le \frac{\lambda^2}{\sqrt{6}}.
\end{equation}
Plugging back into the stationary condition $\diag(\mb \eta)^2 \mb \alpha-\gamma \mb \alpha =  \lambda \mb \eta - \mb \Delta \Bigl\{ \diag(\mb \eta)^2 \mb \alpha - \lambda \mb \eta \Bigr\}$ gives
\begin{equation}
( \diag(\mb \eta)^2 - \gamma ) \mb \alpha \succ \lambda \mb \eta - \lambda^2 / 2 \succ \mb 0.
\end{equation}
Since $\alpha_i<1$ and $\eta_i>\lambda$ for all $i$, which implies that
\begin{equation}
\gamma < \eta_{\min}^2 - \lambda \eta_{\min} + \lambda^2/2, \label{eqn:gamma-bound}
\end{equation}
where $\eta_{\min}$ is the smallest of the $\eta_i$. 

\textbf{Negative curvature in Hessian} Recall the Riemannian Hessian on the sphere is defined as
\begin{eqnarray}
\hess{\phihat_{\mb \sigma}}{\mb a} &=& \mb P_{\mb a^\perp} \Bigl( \mb M_{\mb \sigma} - (\mb a^* \mb M_{\mb \sigma} \mb a + \mb b_{\mb \sigma}^* \mb a  ) \mb I \Bigr) \mb P_{\mb a^\perp}\\
&=& \mb P_{\mb a^\perp} \Bigl( -\mb U \diag(\mb \eta)^2 \mb U^* +\gamma \mb I \Bigr) \mb P_{\mb a^\perp}.
\end{eqnarray}
Below argument shows that this Riemmanian Hessian has negative eigenvalues.
Let $\tilde{\mb U}$ be an orthonormal matrix generated via
\begin{equation}
\tilde{\mb U} \doteq \mb U (\mb U^* \mb U)^{-1/2}
\end{equation}
Whenever $\norm{\mb \Delta}{\ell^2 \to \ell^2 } < 1/2$ holds, Lemma \ref{lem:orth-perturb} guarantees $\norm{\mb U - \tilde{\mb U}}{\ell^2 \to \ell^2} < 3 \norm{\mb \Delta}{\ell^2 \to \ell^2}$. Under this condition, we can lower bound the smallest nonzero eigenvalue of $\mb U \diag(\mb \eta)^2 \mb U^*$, as 
\begin{eqnarray}
\lambda_{\min}( \mb U \diag(\mb \eta)^2 \mb U^* ) &=& \sigma_{\min}( \mb U \diag(\mb \eta) )^2 \\
 &\ge& \left( \max\set{ \sigma_{\min}\left( \tilde{\mb U} \diag(\mb \eta) \right) - \norm{ \mb U - \tilde{\mb U}}{} \norm{\diag(\mb \eta)}{} , 0 } \right)^2 \\
 &=& \left( \max \set{ \eta_{\min} - \norm{\mb U - \tilde{\mb U}}{} \eta_{\max}, 0 } \right)^2 \\
 &\ge& \eta_{\min}^2 - 3 \eta_{\max}  \eta_{\min} \norm{ \mb \Delta }{ \ell^2 \to \ell^2 }. 
\end{eqnarray}
Since $\lambda<\eta_{\min}\le\eta_{\max} \le 1$, additionally if $\norm{\mb \Delta}{\ell^2 \to \ell^2} \le \frac{\lambda}{6}$, we have
\begin{equation} 
\label{eqn:delta-cond-1}
3 \eta_{\max} \eta_{\min} \norm{\mb \Delta}{\ell^2 \to \ell^2} \le \lambda \eta_{\min} - \lambda^2 / 2, 
\end{equation}
Together with \eqref{eqn:2-inf-bound} and \eqref{eqn:gamma-bound}, we can obtain
\begin{equation}
\lambda_{\min}(\mb U \diag(\mb \eta)^2 \mb U^* ) > \gamma,
\end{equation}
or
\begin{equation}
\lambda_{\max}( \mb M_{\mb \sigma} ) < - \gamma. 
\end{equation}
Thus, whenever the following conditions are satisfied
\begin{equation}
\lambda < 1, \quad \norm{\mb \Delta }{\ell^2 \to \ell^\infty} \le \frac{\lambda^2}{\sqrt{6}}, \quad \norm{ \mb \Delta }{\ell^2 \to \ell^2} \le \frac{\lambda}{6},
\end{equation}
we have $\lambda_{\max}(\mb M_{\mb \sigma}) < - \gamma$ as desired.

Above calculations imply that for every $\mb \xi \in \range{ \mb M_{\mb \sigma} } \subseteq \R^k$,
\begin{equation}
\mb \xi^* \left( \mb M_{\mb \sigma} + \gamma \mb I \right) \mb \xi < 0. 
\end{equation}
Since
\begin{equation}
\hess{\phihat_{\mb \sigma}}{\bar{\mb a}} = \mb P_{\bar{\mb a}^\perp} \Bigl( \mb M_{\mb \sigma} + \gamma \mb I \Bigr) \mb P_{\bar{\mb a}^\perp},
\end{equation}
for $\mb \xi \in \bar{\mb a}^\perp \cap \range{ \mb M_{\mb \sigma} }$, 
\begin{equation}
\mb \xi^* \hess{\phihat_{\mb \sigma}}{\bar{\mb a}} \mb \xi < 0.
\end{equation}
Hence, on the relative interior $\relint{ R_{\mb \sigma} }$, $\phihat \equiv \phihat_{\mb \sigma}$ obtains, and so this implies that for $\norm{\mb \sigma}{0} > 1$, there are no local minima in $\relint{R_{\mb \sigma}}$.

\textbf{Relative boundaries}
We first note that if $\norm{\mb \sigma}{0} = 1$ and $I=\set{i}$, either $R_{\mb \sigma}$ is empty when $\norm{\injector^* \shift{\extend{\mb a_0}}{-i} }{2}\le\lambda$, or it contains an open ball around $\range{\mb M_{\mb \sigma}} \bigcap \bb S^{k-1}=\pm\frac{\injector^* \shift{\extend{\mb a_0}}{-i} }{ \norm{\injector^* \shift{\extend{\mb a_0}}{-i} }{2} }$. Hence, if $\bar{\mb a} \in \rbdy{ R_{\mb \sigma} }$ is a stationary point and $\mb \sigma \ne \mb 0$, we necessarily have $\norm{\mb \sigma}{0} \ge 2$. 

Since $\bar{\mb a}$ is on the boundary of $R_{\mb \sigma}$, it is also in $\rbdy{ \cl{ R_{\mb \sigma'} }}$ for some $\mb \sigma' \ne \mb \sigma$. Let 
\begin{equation}
\Xi = \set{ \mb \sigma' \mid \bar{\mb a} \in \rbdy{ \cl { R_{\mb \sigma' }} } }.
\end{equation}
Suppose that for every $\mb \sigma' \in \Xi$, $\mb \sigma \unlhd \mb \sigma'$. Hence, $\range{\mb M_{\mb \sigma}} \subseteq \range{ \mb M_{\mb \sigma'}}$ for every $\mb \sigma' \in \Xi$ and
\begin{equation}
\mb \xi^* \hess{ \phihat_{\mb \sigma'} }{\bar{\mb a}} \mb \xi < 0, \quad   \forall \; \mb \xi \in \mathrm{range}(\mb M_{\mb \sigma}), \; \mb \sigma' \in \Xi. 
\end{equation}
By continuity of the gradients, $\bar{\mb a}$ is a stationary point for {\em every} $\phihat_{\mb \sigma'}$ such that $\mb \sigma' \in \Xi$. If we choose an arbitrary nonzero $\mb \xi \in \mathrm{range}(\mb M_{\mb \sigma})$, we have that for every $\mb \sigma' \in \Xi$, 
\begin{equation}
\phihat_{\mb \sigma'}(\mc P_{\bb S^{k-1}}\brac{\bar{\mb a}+t \mb \xi}  ) < \phihat(\bar{\mb a}) - \Omega(t^2). 
\end{equation}
There exists a neighborhood $N$ of $\bar{\mb a}$ for which, at every $\mb a \in N\cap R_{\mb \sigma'}$, $\phihat(\mb a) = \phihat_{\mb \sigma'}(\mb a)\le\phihat(\bar{\mb a}) $ for some $\mb \sigma' \in \Xi$. Hence, $\bar{\mb a}$ is not a local minimum of $\phihat$. 

\textbf{Local minima}
If $\norm{\mb \sigma}{0} = 1$ and $I=\set{i}$, then the simplified objective function is
\begin{equation}
\phihat_{\mb \sigma} = -\tfrac{1}{2}\innerprod{\sigma_i\injector^* \shift{\extend{\mb a_0}}{-i} }{\mb a}^2 +\lambda\innerprod{\sigma_i\injector^* \shift{\extend{\mb a_0}}{-i} }{\mb a} + c_{\mb \sigma}.
\end{equation}
The minimizer appears at the boundary for $\innerprod{\sigma_i\injector^* \shift{\extend{\mb a_0}}{-i} }{\mb a}$, namely $\norm{\injector^* \shift{\extend{\mb a_0}}{-i}}{2}$\footnote{The other boundary point is $\innerprod{\sigma_i\injector^* \shift{\extend{\mb a_0}}{-i} }{\mb a}=\lambda$, which achieves a smaller objective value.} obtained by
\begin{equation}
\bar{\mb a}  = \sigma_i \frac{ \injector^* \shift{\extend{\mb a_0}}{\tau} }{ \norm{\injector^* \shift{\extend{\mb a_0}}{\tau} }{2} }.
\end{equation}
It can be easily verified that 
\begin{eqnarray}
\grad{\phihat_{\mb \sigma}}{\bar{\mb a}} &=& \paren{-\norm{\injector^* \shift{\extend{\mb a_0}}{\tau} }{2}^2+\lambda\norm{\injector^* \shift{\extend{\mb a_0}}{\tau} }{2}}\times\paren{\mb I-\bar{\mb a}\bar{\mb a}^*}\bar{\mb a}\\
&=&\mb 0\\
\hess{\phihat_{\mb \sigma}}{\bar{\mb a}} &=& \mb P_{\mb a^\perp} \Bigl( -\norm{\injector^* \shift{\extend{\mb a_0}}{\tau} }{2}^2\bar{\mb a}\bar{\mb a}^* +(1 - \lambda \norm{\injector^* \shift{\extend{\mb a_0}}{\tau} }{2}) \mb I \Bigr) \mb P_{\mb a^\perp}\\
&=&(1 - \lambda \norm{\injector^* \shift{\extend{\mb a_0}}{\tau} }{2}) \mb P_{\mb a^\perp}\mb P_{\mb a^\perp}\\
&\succeq&0
\end{eqnarray}
\textbf{Global maxima}
If $\norm{\mb \sigma}{0} = 0$, then the objective remains constant and achieves the global maximum.

\end{proof}

\begin{lemma}[Continuity of the Gradient of  $\phihat$] 
\label{lem:continuity}
$\nabla \phihat$ is a continuous function of $\mb a$. 
\end{lemma}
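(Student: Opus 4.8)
The function $\phihat$ is smooth on the relative interior of each region $R_{\mb\sigma}$, where it equals the quadratic $\phihat_{\mb\sigma}$, so the only place where continuity of $\nabla\phihat$ could fail is on the boundaries where the sign--support pattern of the minimizer $\mb x^*(\mb a)$ changes. Rather than matching the quadratic pieces boundary-by-boundary, the plan is to derive a single gradient formula valid on all of $\bb S^{k-1}$ and then read continuity off directly.

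Write $\phihat(\mb a) = \min_{\mb x} g(\mb a,\mb x)$ with $g(\mb a,\mb x) = \tfrac12\norm{\extend{\mb a_0}}2^2 + \tfrac12\norm{\mb x}2^2 - \innerprod{\mb a\cconv\mb x}{\extend{\mb a_0}} + \lambda\norm{\mb x}1$. Two structural facts drive the argument. First, for each fixed $\mb x$ the map $\mb a\mapsto g(\mb a,\mb x)$ is affine: its only $\mb a$-dependence is the bilinear term, and using the preamble identity $\convmtx{\mb a}^*\extend{\mb a_0}=\revmtx^*\injector\mb a$ one has $\innerprod{\mb a\cconv\mb x}{\extend{\mb a_0}}=\innerprod{\mb a}{\injector^*\revmtx\mb x}$, so $\nabla_{\mb a} g(\mb a,\mb x) = -\injector^*\revmtx\mb x$. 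Consequently $\phihat$, being a pointwise infimum of functions that are affine in $\mb a$, is concave. Second, for each fixed $\mb a$ the map $\mb x\mapsto g(\mb a,\mb x)$ is strongly convex (because of the $\tfrac12\norm{\mb x}2^2$ term), so the minimizer $\mb x^*(\mb a)$ is unique.

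Uniqueness means exactly one affine function in the family attains the infimum at $\mb a$, which collapses the superdifferential of the concave function $\phihat$ to the single element $-\injector^*\revmtx\mb x^*(\mb a)$; equivalently, $\phihat$ is differentiable at every $\mb a$ with
\begin{equation}
\nabla\phihat(\mb a) = -\injector^*\revmtx\,\mb x^*(\mb a),\qquad \mb x^*(\mb a) = \soft{\revmtx^*\injector\mb a}{\lambda}.
\end{equation}
On $\relint{R_{\mb\sigma}}$ this reduces to $\mb M_{\mb\sigma}\mb a+\mb b_{\mb\sigma}$, and across any boundary the coordinate $i$ that enters or leaves the support satisfies $x_i^*=0$ there, so it contributes nothing and the two one-sided expressions agree. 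Continuity is then immediate: $\nabla\phihat$ is the composition of $\mb a\mapsto\mb x^*(\mb a)$ --- continuous because soft-thresholding is $1$-Lipschitz and $\mb a\mapsto\revmtx^*\injector\mb a$ is linear --- with the linear map $\mb x\mapsto-\injector^*\revmtx\mb x$. The continuity of the Riemannian gradient $\grad{\phihat}{\mb a}=(\mb I-\mb a\mb a^*)\nabla\phihat(\mb a)$ invoked in the main proof then follows since $\mb a\mapsto\mb I-\mb a\mb a^*$ is continuous. The one step requiring care --- the main obstacle --- is the passage from uniqueness of the inner minimizer to a singleton superdifferential (equivalently, genuine differentiability at the kink points); this is the envelope/Danskin phenomenon, and it is precisely where the affine-in-$\mb a$ structure and the strong convexity in $\mb x$ must be combined.
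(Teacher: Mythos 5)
Your proof is correct, but it takes a genuinely different route from the paper's. The paper argues piecewise: it writes the gradient on each region as $\nabla\phihat_{\mb\sigma}(\mb a)=-\injector^*\revmtx\mb P_I\paren{\revmtx^*\injector\mb a-\lambda\mb\sigma}$, takes a boundary point $\mb a'$ with the smaller sign pattern $\mb\sigma'\unlhd\mb\sigma$, perturbs to $\mb a=\mb a'+\eps\mb\delta$, and shows directly that the two one-sided expressions differ by $\mc O(\eps)$ because the coordinates in $I\setminus I'$ sit exactly at the soft-threshold. You instead exploit the envelope structure: since $g(\cdot,\mb x)$ is affine in $\mb a$ and the inner problem has a unique, explicitly computable minimizer, you obtain the single global formula $\nabla\phihat(\mb a)=-\injector^*\revmtx\,\soft{\revmtx^*\injector\mb a}{\lambda}$ (which indeed reduces to $\mb M_{\mb\sigma}\mb a+\mb b_{\mb\sigma}$ on each $\relint{R_{\mb\sigma}}$, since $\mb x^*(\mb a)=\mb P_I\paren{\revmtx^*\injector\mb a-\lambda\mb\sigma}$ there), and continuity falls out of the $1$-Lipschitzness of soft-thresholding. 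Your version buys something extra: it proves genuine differentiability of $\phihat$ at the region boundaries, which the paper's boundary-matching computation only implicitly addresses, and it avoids the case analysis over which sign pattern dominates. One point of precision: uniqueness of the inner minimizer is \emph{not} by itself enough to collapse the superdifferential when the index set $\set{\mb x}$ is noncompact --- the argmin map could still jump, leaving a nontrivial superdifferential at the jump point. What actually closes the Danskin step is the continuity of $\mb a\mapsto\mb x^*(\mb a)$ (a continuous selection of supergradients of a concave function forces differentiability), and you do supply exactly that via the closed-form soft-thresholding expression; I would just reorder the argument so that continuity of $\mb x^*$ is invoked as the hypothesis of the envelope step rather than uniqueness alone.
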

\begin{proof}
Recall that for a given $\mb \sigma$, the gradient
\begin{eqnarray}
\nabla \phihat_{\mb \sigma}(\mb a ) &=& -\injector^* \revmtx \mb P_I \revmtx^*\injector\mb a+\lambda\injector ^* \revmtx\mb P_I\mb \sigma\\
&=&-\injector^* \revmtx \mb P_I \paren{\revmtx^*\injector\mb a-\lambda\mb\sigma}
\end{eqnarray}
This is a continuous function within the relative interior of $R_{\mb \sigma}$. Next, we show this function is continuous at the relative boundary of $R_{\mb \sigma}$. Let $\mb a'\in\rbdy{R_{\mb \sigma}}$, and $\mb\sigma'=\sign(\mb a')$, $I=\supp(\mb\sigma')$ are the corresponding sign and support. Without loss of generality, we assume $\mb\sigma'\unlhd\mb\sigma$, denote $\mb a=\mb a'+\eps\mb\delta$ ($\norm{\mb\delta}2=1$) and $I_{\mb\delta}=I\setminus I'$, then
\begin{eqnarray}
\nabla \phihat_{\mb \sigma}(\mb a )-\nabla \phihat_{\mb \sigma'}(\mb a' )
&=&-\injector^* \revmtx \mb P_I \paren{\revmtx^*\injector\mb a-\lambda\mb\sigma}+\injector^* \revmtx \mb P_{I'} \paren{\revmtx^*\injector\mb a'-\lambda\mb\sigma'}\\
&=&-\injector^* \revmtx (\mb P_{I'}+\mb P_{I_{\delta}})  \paren{\revmtx^*\injector\mb a-\lambda\mb\sigma}+\injector^* \revmtx \mb P_{I'} \paren{\revmtx^*\injector\mb a'-\lambda\mb\sigma'}\\
&=&-\injector^* \revmtx \mb P_{I_{\delta}}\paren{\revmtx^*\injector\mb a-\lambda\mb\sigma}-\eps\injector^* \revmtx \mb P_{I'}\revmtx^*\injector\mb\delta
\end{eqnarray}
Since $\norm{\mb P_{I_{\delta}}\paren{\revmtx^*\injector\mb a-\lambda\mb\sigma}}{\infty}=\eps\norm{\mb P_{I_{\delta}}\revmtx^*\injector\mb\delta}{\infty}$, we have $\norm{\nabla \phihat_{\mb \sigma}(\mb a )-\nabla \phihat_{\mb \sigma'}(\mb a' )}{\infty}\le\mc O(\eps)$.
\end{proof}

\section{Proof of Lemma 3.1}

\begin{lemma}
\label{lem:stage2}
Let $\lambda_{rel} = \lambda/\norm{\mb x_0}{\infty}$,
suppose the ground truth $\mb a_0$ satisfies 
\vspace{-.1in}
\begin{equation}
\abs{\innerprod{\mb a_0}{\injector\shift{\extend{\mb a_0}}{\tau\neq 0}}}<\lambda_{rel}^2-\paren{2+1/\lambda_{rel}^2}\sqrt{1-\lambda_{rel}^2}
\vspace{-.05in}
\end{equation}
 for any nonzero shift $\tau$, and $\mb x_0$ is separated enough such that any two nonzero components are at least $2k$ entries away from each other. If initialized at some $\mb a\in\bb S^{k-1}$ that $\abs{\innerprod{\mb a}{\mb a_0}}>\lambda/\norm{\mb x_0}{\infty}$,  
a small-stepping projected gradient method minimizing $\varphi(\mb a)$ recovers the signed ground truth $\pm\mb a_0$.
\end{lemma}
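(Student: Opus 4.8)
The plan is to turn the $2k$-separation of $\mb x_0$ into an \emph{exact} decoupling that reduces $\varphi$ to a one-dimensional function of the correlation $c\doteq\innerprod{\mb a}{\mb a_0}$, and then to show that the Riemannian gradient of $\varphi$ points monotonically toward $\pm\mb a_0$ throughout the initialization region. First I would record the structural consequence of separation: since $\mb a\in\bb S^{k-1}$ has length $k$ while the spikes of $\mb x_0$ sit at locations $\set{\tau_j}$ that are at least $2k$ apart, the shifted kernels $\shift{\extend{\mb a}}{\tau_j}$ have pairwise disjoint supports, so the corresponding columns of $\convmtx{\mb a}$ are exactly orthonormal and $\paren{\convmtx{\mb a}^*\convmtx{\mb a}}_{\set{\tau_j}}=\mb I$. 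Hence \emph{Simplification II holds exactly here}, the Lasso \eqref{eqn:lasso} splits into $N$ independent single-spike problems, and --- on the region where the sign--support pattern is the correct one, $I=\set{\tau_j}$ with $\mb\sigma=\sign(\mb x_0)$ --- the identity $[\convmtx{\mb a}^*\mb y]_{\tau_j}=c\,x_{0,j}$ collapses the marginal objective to
\begin{equation}
\varphi(\mb a)=\tfrac12\norm{\mb x_0}{2}^2-\tfrac12\sum_j\paren{\max\set{c\abs{x_{0,j}}-\lambda,\,0}}^2,
\end{equation}
which depends on $\mb a$ only through $c$.

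Next I would differentiate. Writing $\varphi=f(c)$ with $c=\innerprod{\mb a}{\mb a_0}$, the chain rule gives Euclidean gradient $f'(c)\mb a_0$, so the Riemannian gradient is
\begin{equation}
\grad{\varphi}{\mb a}=f'(c)\,\mb P_{\mb a^\perp}\mb a_0,\qquad f'(c)=-\sum_{j:\,c\abs{x_{0,j}}>\lambda}\abs{x_{0,j}}\paren{c\abs{x_{0,j}}-\lambda}.
\end{equation}
As long as at least one spike is active $f'(c)<0$, and the largest spike is active precisely when $c\norm{\mb x_0}{\infty}>\lambda$, i.e. when $c>\lambda_{rel}$. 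Therefore on the initialization region $-\grad{\varphi}{\mb a}=\abs{f'(c)}\,\mb P_{\mb a^\perp}\mb a_0$ is a strictly positive multiple of the tangent vector pointing from $\mb a$ toward $\mb a_0$; it vanishes only when $\mb P_{\mb a^\perp}\mb a_0=\mb 0$, i.e. at $\mb a=\pm\mb a_0$. Thus the only stationary points in the region are $\pm\mb a_0$ and each small-stepping projected gradient step strictly increases $c$ (taking WLOG $c>0$; the case $c<0$ converges to $-\mb a_0$ by the sign symmetry). This is the mechanism that drives the method to the signed ground truth.

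The quantitative heart --- and the step I expect to be the main obstacle --- is verifying that the sign--support pattern really is the correct one everywhere on $\set{\mb a:\,c\ge\lambda_{rel}}$, so that the clean formula above is legitimate along the whole trajectory. The on-support magnitudes $c\abs{x_{0,j}}$ only grow as $c$ increases, so the burden is to keep every off-support entry below the threshold $\lambda$. Decomposing $\mb a=c\,\mb a_0+\sqrt{1-c^2}\,\mb e$ with $\mb e\perp\mb a_0$ and $\norm{\mb e}{2}=1$, an off-support entry at offset $\tau$ from spike $j$ equals $x_{0,j}\innerprod{\mb a}{\injector^*\shift{\extend{\mb a_0}}{-\tau}}$ and is bounded by $\abs{x_{0,j}}\paren{c\mu+\sqrt{1-c^2}}$, where $\mu=\max_{\tau\neq0}\abs{\innerprod{\mb a_0}{\injector^*\shift{\extend{\mb a_0}}{-\tau}}}$ is the shift-coherence appearing in the hypothesis. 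Carrying this through the Lasso irrepresentability (subgradient) condition, rather than the crude correlation bound, and using that the within-block Gram of $\mb a$ is itself a perturbation of that of $\mb a_0$, is what produces the precise threshold $\mu<\lambda_{rel}^2-\paren{2+1/\lambda_{rel}^2}\sqrt{1-\lambda_{rel}^2}$; the worst case occurs at the boundary $c=\lambda_{rel}$, and the factors $\lambda_{rel}^2$ and $1/\lambda_{rel}^2$ arise from the amplification in that condition.

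Finally I would assemble the convergence argument. Because $c$ increases along the negative gradient, the region $\set{\mb a:\,c\ge\lambda_{rel}}$ is forward invariant, and since the off-support bound only improves as $c\to1$, the correct sign--support pattern --- established in the previous step --- persists along the entire trajectory; new spikes may only activate, which leaves the gradient direction unchanged. For a sufficiently small step size the projected-gradient update preserves the strict increase of $c$ by the usual descent argument (the relevant curvature is bounded away from the single limit point $c=1$), and compactness of the sphere together with $\pm\mb a_0$ being the only stationary points forces convergence to $\sign(c_0)\mb a_0$, where $c_0$ is the initial correlation. The remaining care is at the boundaries between sign--support regions, where $\varphi$ is only piecewise smooth; there I would invoke continuity of $\nabla\varphi$ (as in \Cref{lem:continuity} and the boundary analysis of \Cref{thm:lm-nearly-orthogonal}) to conclude that the flow crosses these kinks while still increasing $c$ rather than stalling.
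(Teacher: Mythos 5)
Your proposal follows essentially the same route as the paper's proof: use the $2k$-separation to decouple the Lasso so that $\mb x^\star(\mb a)=\soft{\innerprod{\mb a}{\mb a_0}\,\mb x_0}{\lambda}$ on the initialization region, observe that the Riemannian gradient then reduces to a strictly negative multiple of $\mb P_{\mb a^\perp}\mb a_0$ (your $f'(c)\,\mb P_{\mb a^\perp}\mb a_0$ is exactly the paper's $-\innerprod{\mb x^\star}{\mb x_0}(\mb I-\mb a\mb a^*)\mb a_0$), and conclude that small-stepping projected gradient descent flows to $\pm\mb a_0$. The only difference is one of emphasis: you leave the quantitative verification of support recovery on the whole region $\set{\abs{\innerprod{\mb a}{\mb a_0}}\ge\lambda_{rel}}$ as a sketch, whereas the paper carries out that bound explicitly (via the decomposition $\mb a=\innerprod{\mb a}{\mb a_0}\mb a_0+\mb\delta$ and the shift-coherence hypothesis), though in a similarly compressed form.
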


\begin{proof}Without loss of generality, we are going to assume $\norm{\mb x_0}{\infty} =1$ for simplicity. Given that
\begin{equation}
\abs{\innerprod{\mb a_0}{\injector\shift{\extend{\mb a_0}}{\tau\neq 0}}}<\lambda^2-\sqrt{1-\lambda^2}\paren{2+1/\lambda^2}
\end{equation}
and $\mb a=\innerprod{\mb a}{\mb a_0}\mb a_0+\mb\delta$ with $\norm{\mb\delta}2\le\sqrt{1-\lambda^2}$, therefore
\begin{eqnarray*}
\abs{\innerprod{\mb a}{\injector\shift{\mb a}{\tau}}}
&=&\abs{\innerprod{\innerprod{\mb a}{\mb a_0}\mb a_0+\mb\delta}{\injector\shift{\innerprod{\mb a}{\mb a_0}\mb a_0+\mb\delta}{\tau}}}\\
&\le&\innerprod{\mb a}{\mb a_0}^2 \abs{\innerprod{\mb a_0}{\injector\shift{\mb a_0}{\tau}}}+2\innerprod{\mb a}{\mb a_0}\norm{\mb\delta}2+\norm{\mb \delta}2^2\\
&<&1-\sqrt{1-\lambda^2}/{\lambda^2}
\end{eqnarray*}
Moreover, as $\mb x_0$ is sufficiently separated, we have
\begin{eqnarray*}
\abs{\innerprod{\mb a}{\injector\shift{\mb a}{\tau}}\norm{\mb x^\star}{\infty}-\innerprod{\mb a}{\injector\shift{\mb a_0}{\tau}}\norm{\mb x_0}{\infty}}
&\le&\abs{\innerprod{\mb a}{\injector\shift{\mb a}{\tau}}}\norm{\mb x_0-\mb x^\star}{\infty}+\abs{\innerprod{\mb a}{\injector\shift{\mb a_0-\mb a}{\tau}}}\norm{\mb x_0}{\infty}\\
&<&\lambda\abs{\innerprod{\mb a}{\injector\shift{\mb a}{\tau}}}+\norm{\mb a_0-\mb a}2\norm{\mb x_0}{\infty}\\
&<&\lambda.
\end{eqnarray*}
Hence, there exists a unique nonzero minimizer satisfying $\supp{\mb x^\star}\subset \supp{\mb x_0}$, and the optimality condition for $\mb x^\star$ implies
\begin{equation}
\mb x^\star=\soft{\innerprod{\mb a}{\mb a_0}\mb x_0}{\lambda},
\end{equation}


In this case, we can calculate the Euclidean gradient
\begin{eqnarray}
\nabla\varphi(\mb a)
&=&\injector\mb C^*_{\mb x^\star}\paren{\mb a\cconv\mb x^\star-\mb a_0\cconv\mb x_0}\\
&=&\norm{\mb x^\star}2^2\mb a-\innerprod{\mb x^\star}{\mb x_0}\mb a_0,
\end{eqnarray}
and the Riemannian gradient
\begin{eqnarray}
\grad{\varphi}{\mb a}&=&(\mb I-\mb a\mb a^*)\nabla\varphi(\mb a)\\
&=&-\innerprod{\mb x^\star}{\mb x_0}(\mb I-\mb a\mb a^*)\mb a_0.
\end{eqnarray}

It is easy to check that at any point along the geodesic curve between $\mb a_0$ and $\mb a$, support recovery of $\mb x^\star$ is achieved. A small-stepping gradient descent algorithm moves towards the signed ground truth $\pm\mb a_0$, as desired. 
\end{proof}

\section{Proof of Lemma 6.1}
\begin{lemma}
Suppose $\mb y=\mb a_0\cconv\mb x_0$ with $\mb x_0=\mb e_0$, and $p=q\ge 2$. Then for any shift $\tau$, positive scalar $\eps$ and $\lambda$ such that at every $\mb a\in \bb S_q\cap\bb B\paren{\frac{ \injector^*\shift{\mb a_0}{-\tau} }{\norm{ \injector^*\shift{\mb a_0}{-\tau} }q},\eps}$ the solution $\mb x^*(\mb a)\doteq \arg\min_{\mb x} \psi_p(\mb a,\mb x)$  is (i) unique and (ii) supported on the $\tau$-th entry, the point $\bar{\mb a}\doteq\frac{ \injector^*\shift{\mb a_0}{-\tau} }{\norm{ \injector^*\shift{\mb a_0}{-\tau} }q}$ is a strict local minimizer of the cost $\varphi_p(\mb a) $ over the manifold $\norm{\mb a}q=1$.
\end{lemma}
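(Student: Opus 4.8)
The plan is to verify the first- and second-order optimality conditions for $\min\varphi_p(\mb a)$ subject to $\norm{\mb a}{q}=1$ directly at $\bar{\mb a}$. First I would use the two hypotheses to trivialize the inner minimization: since $\mb x^*(\mb a)$ is unique and supported on the single index $\tau$ for every $\mb a\in\Ba\paren{\bar{\mb a},\eps}\cap\bb S_q$, I may write $\mb x^*(\mb a)=x^*(\mb a)\,\mb e_\tau$ with $x^*$ a scalar of fixed sign, so that $\mb a\cconv\mb x^*=x^*\shift{\extend{\mb a}}{\tau}=x^*\mb S\injector\mb a$, where $\mb S$ is the unitary circular shift by $\tau$. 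Writing $\mb r(\mb a,x)=\extend{\mb a_0}-x\,\mb S\injector\mb a$ for the residual (recall $\mb y=\extend{\mb a_0}$) and $c=1/\norm{\injector^*\shift{\extend{\mb a_0}}{-\tau}}{q}$ for the normalizing constant, the objective collapses to the scalar-parametrized form $\varphi_p(\mb a)=\min_x F(\mb a,x)$ with $F(\mb a,x)=\tfrac1p\norm{\mb r(\mb a,x)}{p}^p+\lambda\abs{x}$. For $p\ge2$ and $\mb S\injector\mb a\neq\mb0$ the map $x\mapsto F(\mb a,x)$ is strictly convex, so $F_{xx}>0$; combined with the uniqueness hypothesis this makes $x^*(\cdot)$ a $C^1$ function (implicit function theorem) and $\varphi_p$ a $C^2$ function on the patch, which is exactly what the optimality calculus below requires.

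\textbf{First-order condition (where $p=q$ enters).} By the envelope theorem the Euclidean gradient is $\nabla\varphi_p(\mb a)=-x^*\injector^*\mb S^*J_p(\mb r)$, where $J_p(\mb t)=\sign(\mb t)\odot\abs{\mb t}^{p-1}$ acts entrywise. The operator $\injector^*\mb S^*=\injector^*\shift{\cdot}{-\tau}$ reads only the window $\{\tau,\dots,\tau+k-1\}$, and on that window $\shift{\extend{\bar{\mb a}}}{\tau}$ equals $c$ times $\extend{\mb a_0}$, so the residual there is $\mb r=(1-x^*c)\,\extend{\mb a_0}$. Hence $J_p(\mb r)$ factors on the window as $J_p(1-x^*c)\,J_p(\extend{\mb a_0})$, and since shifts and coordinate restriction commute with the entrywise power,
\begin{equation*}
\nabla\varphi_p(\bar{\mb a})=-x^*J_p(1-x^*c)\,\injector^*\shift{J_p(\extend{\mb a_0})}{-\tau}=\mu\,J_q(\bar{\mb a}),\qquad \mu=-\tfrac{x^*J_p(1-x^*c)}{c^{\,q-1}}.
\end{equation*}
The last equality is precisely where $p=q$ is used: $J_q(\bar{\mb a})=c^{\,q-1}\injector^*\shift{J_q(\extend{\mb a_0})}{-\tau}$ is the outward normal $\nabla\paren{\tfrac1q\norm{\mb a}{q}^q}$ to $\bb S_q$ at $\bar{\mb a}$. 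Thus $\nabla\varphi_p(\bar{\mb a})$ is parallel to the constraint normal, so $\bar{\mb a}$ is a KKT point and its Riemannian gradient vanishes. When $p\neq q$ the two entrywise powers no longer match and this alignment breaks, which is the mechanism behind \eqref{eqn:p_q_lcoal}.

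\textbf{Second-order condition.} On the tangent space $T=\set{\mb v:\innerprod{J_q(\bar{\mb a})}{\mb v}=0}$ I would test positivity of the Lagrangian Hessian $\nabla^2\varphi_p(\bar{\mb a})-\mu(q-1)\diag(\abs{\bar{\mb a}}^{q-2})$. The reduced Euclidean Hessian is the Schur complement $\nabla^2\varphi_p=F_{\mb a\mb a}-F_{\mb a x}F_{xx}^{-1}F_{x\mb a}$, with $F_{\mb a\mb a}=(x^*)^2\mb H$, $\mb H=(p-1)\injector^*\mb S^*\diag(\abs{\mb r}^{p-2})\mb S\injector\succeq\mb0$, $F_{xx}=\bar{\mb a}^*\mb H\bar{\mb a}>0$, and $F_{\mb a x}=-\injector^*\mb S^*J_p(\mb r)+x^*\mb H\bar{\mb a}$. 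The first summand of $F_{\mb a x}$ is $\nabla\varphi_p(\bar{\mb a})/x^*\parallel J_q(\bar{\mb a})$, hence it is annihilated by every $\mb v\in T$; restricting to $T$ then gives the clean form $\mb v^*\nabla^2\varphi_p\mb v=(x^*)^2\big(\mb v^*\mb H\mb v-(\mb v^*\mb H\bar{\mb a})^2/(\bar{\mb a}^*\mb H\bar{\mb a})\big)\ge0$ by Cauchy--Schwarz in the $\mb H$-semi-inner-product, strict for $\mb v\in T\setminus\set{\mb0}$ provided $\mb H\succ\mb0$ and $\bar{\mb a}\notin T$ (the latter holds since $\innerprod{J_q(\bar{\mb a})}{\bar{\mb a}}=\norm{\bar{\mb a}}{q}^q=1$). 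Finally, because the $\lambda$-shrinkage keeps $x^*$ strictly below the value $1/c$ that would zero the window residual, one has $x^*c<1$ and therefore $\mu<0$, so the manifold-curvature term $-\mu(q-1)\diag(\abs{\bar{\mb a}}^{q-2})\succeq\mb0$ only reinforces positivity; the second-order sufficient condition holds and $\bar{\mb a}$ is a strict local minimizer.

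\textbf{Main obstacle.} The conceptual core is the first-order alignment, but it is essentially forced by $p=q$ and short. The genuinely delicate points are (i) rigorously deducing the $C^2$ regularity of $\varphi_p$ from the uniqueness/support hypotheses so that the envelope and Schur-complement calculus is legitimate, and (ii) pinning down $\sign(\mu)$: the whole curvature argument hinges on $x^*c<1$, i.e. $\mu<0$, so that the non-Euclidean geometry of $\bb S_q$ cooperates rather than fights. For $p=q=2$ everything is automatic ($\mb H=\mb I_k$ and the curvature term is $-\mu\mb I$), so the remaining work for $p=q>2$ is controlling the non-vanishing of the window residual that guarantees $\mb H\succ\mb0$.
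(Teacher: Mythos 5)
Your overall strategy---show that the Riemannian gradient on the $\ell^q$ sphere vanishes at $\bar{\mb a}$ and that the Riemannian (Lagrangian) Hessian is positive definite on the tangent space---is the same as the paper's, and your first-order step is correct and matches the paper's: at $\bar{\mb a}$ the window residual is a positive multiple of $\bar{\mb a}$ itself (indeed $\lambda^{1/(p-1)}\bar{\mb a}$), so the Euclidean gradient is proportional to $\sign(\bar{\mb a})\circ\abs{\bar{\mb a}}^{\circ(p-1)}$, which is the constraint normal precisely when $p=q$; your sign computation giving $\mu=-\bar\alpha\lambda<0$ is also right. Your Schur-complement form of $\nabla^2\varphi_p$ is an equivalent repackaging of the paper's computation via $\nabla_{\mb a}\alpha$.

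The genuine gap is in the second-order step, and it is not the deferrable technicality you label it as. For $p=q>2$ and $\tau\neq 0$ the matrix $\mb H=(p-1)\injector^*\mb S^*\diag(\abs{\mb r}^{\circ(p-2)})\mb S\injector$ is \emph{necessarily} singular: on the window the residual equals a positive constant times $\extend{\mb a_0}$ restricted to $\set{\tau,\dots,\tau+k-1}$, so it vanishes exactly where $\bar{\mb a}$ vanishes, and a shift-truncation with $\tau\neq 0$ always has zero entries. For a tangent direction $\mb \delta_c$ with $\supp{\mb\delta_c}\cap\supp{\bar{\mb a}}=\emptyset$ one checks that $\mb\delta_c^*\mb H\mb\delta_c=0$, that $\mb\delta_c^*F_{\mb a x}=0$ (so the Schur correction contributes nothing), and that the curvature term $-\mu(q-1)\diag(\abs{\bar{\mb a}}^{\circ(q-2)})$ also annihilates $\mb\delta_c$ because $q>2$. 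Hence the Riemannian Hessian is exactly zero along these directions; the second-order sufficient condition fails, and ``controlling the non-vanishing of the window residual'' is impossible because that vanishing is forced by the truncation. This is why the paper's proof has a third piece you are missing: for such $\mb\delta_c$ it argues at zeroth order that adding $\mb\delta_c$ leaves the fit unchanged on $\supp{\bar{\mb a}}$ and strictly worsens the residual entrywise off it, while the $\ell^q$ renormalization can only further increase the objective, so $\varphi_p\paren{(\bar{\mb a}+\mb\delta_c)/\norm{\bar{\mb a}+\mb\delta_c}{q}}\ge\varphi_p(\bar{\mb a})$ directly. Without an argument of this kind (or a higher-order expansion) your proof establishes the claim only for $p=q=2$, or for $\tau=0$ with a nowhere-vanishing $\mb a_0$.
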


\begin{proof} Write $\mb x^*(\mb a)=\alpha\mb e_{\tau}$ at point $\mb a\in\bb S_p\cap\bb B\paren{\frac{ \injector^*\shift{\mb a_0}{-\tau} }{\norm{ \injector^*\shift{\mb a_0}{-\tau} }q},\eps}$, and suppose without loss of generality that $\alpha > 0$. Then using the KKT conditions, we have
\begin{align}
\lambda &= \innerprod{ \shift{\extend{\mb a}}{\tau} }{ \diag\sign\paren{\extend{\mb a_0}-\alpha\shift{\extend{\mb a}}{\tau}}\abs{\extend{\mb a_0}-\alpha\shift{\extend{\mb a}}{\tau}}^{\circ\paren{p-1}} }\\
&= \innerprod{ \mb a }{ \diag\sign\paren{\injector^*\shift{\extend{\mb a_0}}{-\tau}-\alpha\mb a}\abs{\injector^*\shift{\extend{\mb a_0}}{-\tau}-\alpha\mb a}^{\circ\paren{p-1}} }.
\end{align}

\paragraph{Magnitude of $\mb x^*$}
At point $\bar{\mb a}=\frac{ \injector^*\shift{\mb a_0}{-\tau} }{\norm{ \injector^*\shift{\mb a_0}{-\tau} }q}$, there exists closed form solution   
\begin{align}
\mb x^*\paren{\bar{\mb a}} = \bar\alpha\mb e_{\tau}=\paren{\norm{ \injector^*\shift{\mb a_0}{-\tau} }q - \lambda^{\frac1{p-1}}}\mb e_{\tau}.
\end{align}
As the shifted support recovery of $\mb x^*(\mb a)$ holds at point $\mb a\in\bb B(\bar{\mb a},\eps)$, with implicit function theorem, we can obtain that $\alpha$ is a differentiable function of $\mb a$, and 
\begin{align}
\nabla_{\mb a}\alpha\paren{\bar{\mb a}}=-\paren{\norm{ \injector^*\shift{\mb a_0}{-\tau} }q-2\lambda^{\frac1{p-1}}}\sign\bar{\mb a}\circ\abs{\bar{\mb a}}^{\circ(p-1)}.
\end{align}

\paragraph{Euclidean Gradient and Hessian}
The Euclidean gradient of $\varphi_p$ at $\bar{\mb a}$ is
\begin{align}
\nabla \varphi_p(\bar{\mb a}) &= \nabla_{\mb a}\psi_{p}\Bigr|_{(\bar{\mb a},\mb x^*(\bar{\mb a}))}\\
&= \injector^*\mb C_{\mb x^*(\bar{\mb a})}^*\diag\sign\paren{\bar{\mb a}\cconv\mb x^*(\bar{\mb a})-\mb y}\abs{\bar{\mb a}\cconv\mb x^*(\bar{\mb a})-\mb y}^{\circ\paren{p-1}}\\
& = \bar\alpha\injector^*\diag\sign\paren{\shift{\bar\alpha\shift{\bar{\mb a}}{\tau}-\extend{\mb a_0}}{-\tau}}\abs{\shift{\bar\alpha\shift{\bar{\mb a}}{\tau}-\extend{\mb a_0}}{-\tau}}^{\circ\paren{p-1}}\\
& = \bar\alpha\diag\sign\paren{\bar\alpha\bar{\mb a}-\injector^*\shift{\extend{\mb a_0}}{-\tau}}\abs{\bar\alpha\bar{\mb a}-\injector^*\shift{\extend{\mb a_0}}{-\tau}}^{\circ\paren{p-1}}.
\end{align}
And the Euclidean Hessian of $\varphi_p$ at $\bar{\mb a}$ is
\begin{align}
\nabla^2\varphi_p (\bar{\mb a})& = (p-1)\bar\alpha \diag\abs{\bar\alpha\bar{\mb a}-\injector^*\shift{\extend{\mb a_0}}{-\tau}}^{\circ\paren{p-2}}\paren{\bar\alpha\mb I+\bar{\mb a}\paren{\nabla_{\mb a}\alpha}^T}\nonumber\\ 
& \quad  +\sign\paren{\bar\alpha\bar{\mb a}-\injector^*\shift{\extend{\mb a_0}}{-\tau}}\circ\abs{\bar\alpha\bar{\mb a}-\injector^*\shift{\extend{\mb a_0}}{-\tau}}^{\circ\paren{p-1}}\paren{\nabla_{\mb a}\alpha}^T.
\end{align}

\paragraph{Riemannian Gradient and Hessian}We call the optimization domain  manifold $\mathcal{M}_q$  where $\norm{\mb a}q=1$ with $2\le q<\infty$, the Riemannian gradient at $\mb a$ can be written as
\begin{equation}
\mathrm{grad}[\varphi_p]\paren{\mb a} = \bb P_{\mb a^\perp}\nabla\varphi_p,
\end{equation}
with 
\begin{align}
\bb P_{\mb a^\perp}(\mb v) = \mb v- \innerprod{\mb v}{\frac{\sign{\mb a}\circ\abs{\mb a}^{\circ\paren{q-1}}}{\norm{\abs{\mb a}^{\circ\paren{q-1}}}2}}\frac{\sign{\mb a}\circ\abs{\mb a}^{\circ\paren{q-1}}}{\norm{\abs{\mb a}^{\circ\paren{q-1}}}2}. 
\end{align}
Therefore, if $p=q$, $\bar{\mb a}$ is a stationary point with vanishing Riemannian gradient  
\begin{align}
\mathrm{grad}[\varphi_p]\paren{\bar{\mb a}}=\mb 0.
\end{align}

\newcommand{\mr}{\mathrm}
\newcommand{\reals}{\mathbb{R}}
The Riemannian Hessian $\mr{Hess}[\varphi_p](\mb a) : T_{\mb a} \mc M_q \to T_{\mb a} \mc M_q$ is defined via the formula
\begin{eqnarray}
\mr{Hess}[\varphi_p](\mb a) \mb \delta &=& \tilde{\nabla}_{\mb \delta} \mr{grad}[\varphi_p](\mb a) \qquad \text{($\mb \delta \in T_{\mb a} \mc M_q$)} \\ 
&=& \mb P_{T_{\mb a} \mc M} {\nabla}_{\mb \delta} \mr{grad}[\varphi_p](\mb a),
\end{eqnarray}
where in the second line we have used Proposition 5.3.2 of the \cite{absil2009}. Here, $\tilde{\nabla}$ is the Riemannian connection on $\mc M_q$,
 $\nabla$ is the standard Euclidean connection (directional derivative) on $\reals^k$ and $\mr{grad}[\varphi_p]$ is any smooth extension of the Riemannian gradient to $\reals^k$. Plugging in and using the definition of Euclidean gradient $\nabla$, we have
\begin{eqnarray}
\mr{Hess}[\varphi_p](\mb a) \mb \delta &=& \bb P_{\mb a^\perp} \nabla_{\mb \delta} \left[ \bb P_{\mb a^\perp} \nabla \varphi_p(\mb a) \right] \\
&=& \bb P_{\mb a^\perp} \frac{d}{dt} \left[ \bb P_{(\mb a + t \mb \delta)^\perp} \nabla \varphi_p(\mb a + t \mb \delta) \right] \Bigr|_{t=0}  \\
&=& \bb P_{\mb a^\perp} \left( \nabla^2 \varphi_p( \mb a) \mb \delta + \frac{d}{dt} \bb P_{(\mb a + t \mb \delta)^\perp} \Bigr|_{t=0} \nabla \varphi_p(\mb a) \right)
\end{eqnarray}
For simplicity, we write 
\begin{equation}
\mb \xi(\mb a) = \frac{\mr{sign}(\mb a) \circ | \mb a|^{\circ (q-1)} }{ \norm{| \mb a |^{\circ (q-1)} }{2}},
\end{equation}
and
\begin{equation}
\bb P_{\mb a^\perp}=\mb I - \mb \xi(\mb a) \mb \xi(\mb a)^*.
\end{equation}
Notice that the vector-valued function $\mb \xi(\mb a)$ is differentiable away from $\mb a = \mb 0$, and that 
\begin{equation}
\frac{d}{dt} \bb P_{(\mb a+t \mb \delta)^\perp} \Bigr|_{t=0} =  - \frac{d}{dt} \mb \xi( \mb a+ t \mb \delta ) \Bigr|_{t=0} \mb \xi(\mb a)^* - \mb \xi(\mb a) \left[ \frac{d}{dt} \mb \xi( \mb a+ t \mb \delta ) \Bigr|_{t=0} \right]^*.
\end{equation}
Since that $\bb P_{\mb a^\perp} \mb \xi(\mb a) = \mb 0$, we have 
\begin{eqnarray}
\mr{Hess}[\varphi_p](\mb a) \mb \delta &=& \bb P_{\mb a^\perp}\left(  \nabla^2 \varphi_p(\mb a) \mb \delta -  \left[ \frac{d}{dt} \Bigr|_{t=0} \mb \xi(\mb a+ t \mb \delta) \right] \mb \xi(\mb a)^* \nabla \varphi_p(\mb a) \right).
\end{eqnarray}
Calculus gives 
\begin{eqnarray}
 \frac{d}{dt} \Bigr|_{t=0} \mb \xi(\mb a+ t \mb \delta)  &=& (q - 1) \frac{|\mb a|^{\circ(q-2)} \circ \mb \delta }{ \norm{ |\mb a|^{\circ(q-1)} }{2} } + \mr{sign}(\mb a) \circ |\mb a|^{\circ(q-1)} \innerprod{ \frac{1}{|\mb a|^{\circ(q-1)}} }{ \mb \delta },
\end{eqnarray}
hence 
\begin{equation}
\bb P_{\mb a^\perp}  \frac{d}{dt} \Bigr|_{t=0} \mb \xi(\mb a+ t \mb \delta)  = \bb P_{\mb a^\perp}  \left[ (q - 1) \frac{|\mb a|^{\circ(q-2)} \circ \mb \delta }{ \norm{ |\mb a|^{\circ(q-1)} }{2} } \right].
\end{equation}
At last, we can obtain the expression 
\begin{equation}
\mr{Hess}[\varphi_p] (\mb a) \mb \delta = \bb P_{\mb a^\perp} \left( \nabla^2 \varphi_p(\mb a) - (q-1) \frac{\innerprod{ \mr{sign}(\mb a) \circ |\mb a|^{\circ(q-1)}  }{ \nabla \varphi_p(\mb a) }}{ \norm{ |\mb a|^{\circ(q-1)} }{2}^2 } \mr{diag}( |\mb a|^{\circ (q-2)} ) \right) \mb \delta,
\end{equation}
for $\mb \delta \in T_{\mb a} \mc M_q$. Noting that for any such $\mb \delta$, $\bb P_{\mb a^\perp} \mb \delta = \mb \delta$, we can identify the Riemannian Hessian with the  matrix
\begin{equation}
\mr{Hess}[\varphi_p] (\mb a) \mb \delta = \bb P_{\mb a^\perp} \left( \nabla^2 \varphi_p(\mb a) - (q-1) \frac{\innerprod{ \mr{sign}(\mb a) \circ |\mb a|^{\circ(q-1)}  }{ \nabla \varphi_p(\mb a) }}{ \norm{ |\mb a|^{\circ(q-1)} }{2}^2 } \mr{diag}( |\mb a|^{\circ (q-2)} ) \right) \bb P_{\mb a^\perp}.
\end{equation}

\paragraph{Local Convexity}
Plugging in  $\bar{\mb a} = \frac{ \injector^*\shift{\mb a_0}{-\tau} }{\norm{\injector^*\shift{\mb a_0}{-\tau}}q}$ with $p=q$, we have
\begin{align}
\mathrm{Hess}[\varphi_p]\paren{\frac{ \injector^*\shift{\mb a_0}{-\tau} }{\norm{\injector^*\shift{\mb a_0}{-\tau}}q}}
&=\bb P_{\bar{\mb a}^\perp}\brac{ \nabla^2\varphi_{p}+\paren{q-1}\bar\alpha\lambda\diag\paren{\abs{\mb a}^{\circ(q-2)}} }\bb P_{\mb a^\perp}\\
& = \bb P_{\bar{\mb a}^\perp}\brac{\paren{q-1}\bar\alpha\paren{\lambda^{\frac{p-2}{p-1}}+\lambda} \diag\abs{\bar{\mb a}}^{\circ(q-2)}}\bb P_{\bar{\mb a}^\perp}.
\end{align}

Hence, the function is strictly convex at point $\bar{\mb a} = \frac{ \injector^*\shift{\mb a_0}{-\tau} }{\norm{\injector^*\shift{\mb a_0}{-\tau}}q}$ along direction $\mb \delta \in T_{\mb a} \mc M_q$ as long as whose support are contained in the support of $\mb a$. We further consider a perturbation $\mb \delta_c$ such that $\supp{\mb\delta_c}\cap\supp{\bar{\mb a}}=\emptyset$, and $\norm{\mb\delta_c}2\le\eps$, then the objective value at $\frac{\bar{\mb a}+\mb\delta_c}{ \norm{\bar{\mb a}+\mb\delta_c}q }$ satisfies
\begin{align}
\varphi_p\paren{\frac{\bar{\mb a}+\mb\delta_c}{ \norm{\bar{\mb a}+\mb\delta_c}q }} & = \frac1p\norm{\mb y-{\frac{\bar{\mb a}+\mb\delta_c}{ \norm{\bar{\mb a}+\mb\delta_c}q }}\cconv\mb x^*(\bar{\mb a}+\mb\delta_c)}p^p+\lambda\norm{\mb x^*(\bar{\mb a}+\mb\delta_c)}1\\
& \ge \frac1p\norm{\mb y-{\frac{\bar{\mb a}}{ \norm{\bar{\mb a}+\mb\delta_c}q }}\cconv\mb x^*(\bar{\mb a}+\mb\delta_c)}p^p+\lambda\norm{\mb x^*(\bar{\mb a}+\mb\delta_c)}1\\
& \ge \varphi_p\paren{\frac{\bar{\mb a}}{ \norm{\bar{\mb a}+\mb\delta_c}q }}\\
& \ge \varphi_p\paren{ \bar{\mb a} }.
\end{align}
This completes the proof.

\end{proof}

\section{Auxiliary Lemmas}
\label{sec:lemmas}
\begin{lemma}{(Lemma B.2 of \cite{SQW15-pp})}
\label{lem:mtx-sqrt-perturb}
Suppose that $\mb A \succ \mb 0$ is a positive definite matrix. For any symmetic matrix $\mb \Delta$ with $\norm{\mb \Delta}{\ell^2 \to \ell^2} \le \sigma_{\min}(\mb A) / 2$,
\begin{equation}
\norm{(\mb A + \mb \Delta)^{-1/2} - \mb A^{-1/2} }{\ell^2 \to \ell^2} \;\le\; \frac{2 \norm{\mb A}{\ell^2 \to \ell^2}^{1/2} \norm{\mb \Delta}{\ell^2 \to \ell^2} }{ \sigma_{\min}(\mb A)^2 }. 
\end{equation}
\end{lemma}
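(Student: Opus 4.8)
The plan is to avoid diagonalizing $\mb A$ and $\mb A + \mb \Delta$ simultaneously---which is impossible since they need not commute---and instead to realize the difference of inverse square roots as the solution of a Sylvester equation, whose operator norm admits a one-line bound. Write $\mb B = \mb A + \mb \Delta$. First I would record the elementary spectral facts. By Weyl's inequality (using that $\mb \Delta$ is symmetric), $\sigma_{\min}(\mb B) \ge \sigma_{\min}(\mb A) - \norm{\mb \Delta}{\ell^2\to\ell^2} \ge \tfrac12\sigma_{\min}(\mb A) > 0$, so $\mb B \succ \mb 0$ and both $\mb A^{-1/2}$ and $\mb B^{-1/2}$ are well defined and positive definite; moreover $\norm{\mb A^{-1}}{\ell^2\to\ell^2} = 1/\sigma_{\min}(\mb A)$ and $\norm{\mb B^{-1}}{\ell^2\to\ell^2} = 1/\sigma_{\min}(\mb B) \le 2/\sigma_{\min}(\mb A)$.

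Next, set $\mb Q = \mb B^{-1/2}$, $\mb P = \mb A^{-1/2}$, and $\mb X = \mb Q - \mb P$, the quantity to be bounded. The algebraic identity $\mb Q \mb X + \mb X \mb P = \mb Q^2 - \mb P^2$, combined with $\mb Q^2 - \mb P^2 = \mb B^{-1} - \mb A^{-1}$ and the resolvent identity $\mb B^{-1} - \mb A^{-1} = -\mb B^{-1}(\mb B - \mb A)\mb A^{-1} = -\mb B^{-1}\mb \Delta\mb A^{-1}$, shows that $\mb X$ solves the Sylvester equation
\begin{equation}
\mb Q \mb X + \mb X \mb P = \mb C, \qquad \mb C \doteq -\mb B^{-1}\mb \Delta\mb A^{-1}.
\end{equation}
Since $\mb P, \mb Q \succ \mb 0$, this equation has the unique solution with integral representation $\mb X = \int_0^\infty e^{-t\mb Q}\,\mb C\,e^{-t\mb P}\,dt$, which I would verify by differentiating $e^{-t\mb Q}\mb C e^{-t\mb P}$ and applying the fundamental theorem of calculus (the boundary term at $t=\infty$ vanishes because $\mb P,\mb Q$ are positive definite).

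From the representation and the fact that $\norm{e^{-t\mb Q}}{\ell^2\to\ell^2} = e^{-t\sigma_{\min}(\mb Q)}$ for symmetric positive definite $\mb Q$, I would bound
\begin{equation}
\norm{\mb X}{\ell^2\to\ell^2} \le \norm{\mb C}{\ell^2\to\ell^2}\int_0^\infty e^{-t(\sigma_{\min}(\mb Q)+\sigma_{\min}(\mb P))}\,dt = \frac{\norm{\mb C}{\ell^2\to\ell^2}}{\sigma_{\min}(\mb Q)+\sigma_{\min}(\mb P)}.
\end{equation}
Then I would estimate the two factors separately. For the numerator, submultiplicativity gives $\norm{\mb C}{\ell^2\to\ell^2} \le \norm{\mb B^{-1}}{\ell^2\to\ell^2}\norm{\mb \Delta}{\ell^2\to\ell^2}\norm{\mb A^{-1}}{\ell^2\to\ell^2} \le 2\norm{\mb \Delta}{\ell^2\to\ell^2}/\sigma_{\min}(\mb A)^2$. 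For the denominator it suffices to discard the nonnegative $\mb Q$ term and use $\sigma_{\min}(\mb Q)+\sigma_{\min}(\mb P) \ge \sigma_{\min}(\mb P) = \sigma_{\min}(\mb A^{-1/2}) = \norm{\mb A}{\ell^2\to\ell^2}^{-1/2}$. Substituting both estimates yields $\norm{\mb X}{\ell^2\to\ell^2} \le 2\norm{\mb A}{\ell^2\to\ell^2}^{1/2}\norm{\mb \Delta}{\ell^2\to\ell^2}/\sigma_{\min}(\mb A)^2$, which is exactly the claimed bound.

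The main obstacle is conceptual rather than computational: because $\mb A$ and $\mb B$ do not commute there is no scalar reduction through a common eigenbasis, so one must locate the right linear-algebraic structure. Recognizing $\mb X$ as the solution of a Sylvester equation and recalling the exponential-integral solution formula---with its clean denominator $\sigma_{\min}(\mb Q)+\sigma_{\min}(\mb P)$---is the crux; once that representation is in hand every remaining step is a one-line norm estimate. An alternative route, inserting the resolvent identity into the representation $\mb M^{-1/2} = \tfrac1\pi\int_0^\infty t^{-1/2}(\mb M + t\mb I)^{-1}\,dt$, also works but requires evaluating a slightly messier scalar integral to recover the constant $2$.
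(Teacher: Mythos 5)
Your proof is correct, and every step checks out: the Weyl bound $\sigma_{\min}(\mb A+\mb\Delta)\ge\sigma_{\min}(\mb A)/2$, the identity $\mb Q\mb X+\mb X\mb P=\mb Q^2-\mb P^2=-\mb B^{-1}\mb\Delta\mb A^{-1}$, the exponential-integral solution of the Sylvester equation (unique since the spectra of $\mb Q$ and $-\mb P$ are disjoint, both matrices being positive definite), and the two final norm estimates combine to give exactly the stated constant, with $\sigma_{\min}(\mb Q)+\sigma_{\min}(\mb P)\ge\sigma_{\min}(\mb A^{-1/2})=\norm{\mb A}{}^{-1/2}$ absorbing the discarded term. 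Be aware, though, that this paper contains no proof of the statement at all: the lemma is imported verbatim as Lemma B.2 of \cite{SQW15-pp}, so the only proof to compare against lives in that reference, which follows the route you yourself sketch as an alternative --- inserting the resolvent identity into an integral representation such as $\mb M^{-1/2}=\tfrac{2}{\pi}\int_0^\infty(t^2\mb I+\mb M)^{-1}\,dt$ and evaluating a scalar integral of the form $\int_0^\infty (t^2+a)^{-1}(t^2+b)^{-1}\,dt$. Your Sylvester-equation argument is a genuinely different and arguably cleaner path: it replaces the explicit integral evaluation with the structural fact that $\mb X\mapsto\mb Q\mb X+\mb X\mb P$ is invertible with solution norm at most $\norm{\mb C}{\ell^2\to\ell^2}/\bigl(\sigma_{\min}(\mb P)+\sigma_{\min}(\mb Q)\bigr)$, so the constant $2$ falls out of $\norm{\mb B^{-1}}{\ell^2\to\ell^2}\le 2/\sigma_{\min}(\mb A)$ alone, with no trigonometric or beta-type integral to compute. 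A side benefit of your route is that it silently proves something slightly stronger: keeping the term $\sigma_{\min}(\mb Q)\ge\norm{\mb A+\mb\Delta}{}^{-1/2}$ rather than discarding it sharpens the bound, whereas the integral-representation proof produces a fixed constant from the outset; the stated (lossier) form is all that is needed where the paper applies it, in the proof of Lemma \ref{lem:orth-perturb}.
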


\begin{lemma} 
\label{lem:orth-perturb} 
Let $\mb U$ be a matrix such that $\mb U^* \mb U = \mb I + \mb \Delta$, with $\norm{\mb \Delta}{\ell^2 \to \ell^2} < 1/2$. Then $\mb U^* \mb U$ is invertible, 
\begin{equation}
\norm{\mb U }{\ell^2 \to \ell^2} < \sqrt{3/2},
\end{equation}
and 
\begin{equation}
\norm{ \mb U - \mb U( \mb U^* \mb U )^{-1/2} }{\ell^2 \to \ell^2} < \; 3\norm{\mb \Delta}{\ell^2 \to \ell^2}.
\end{equation}
\end{lemma}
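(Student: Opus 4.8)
The plan is to prove the three claims in order, leaning on the spectral relationship between $\mb U$ and the symmetric matrix $\mb U^* \mb U = \mb I + \mb \Delta$, and then invoking the square-root perturbation bound already available as Lemma \ref{lem:mtx-sqrt-perturb}. Everything reduces to eigenvalue estimates for $\mb I + \mb \Delta$ plus one application of submultiplicativity.

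First I would settle invertibility together with the operator-norm bound. Since $\mb \Delta$ is symmetric with $\norm{\mb \Delta}{\ell^2 \to \ell^2} < 1/2$, Weyl's inequality places every eigenvalue of $\mb U^* \mb U = \mb I + \mb \Delta$ in the interval $(1 - \norm{\mb \Delta}{\ell^2 \to \ell^2},\, 1 + \norm{\mb \Delta}{\ell^2 \to \ell^2}) \subset (1/2, 3/2)$. These eigenvalues are strictly positive, so $\mb U^* \mb U$ is positive definite and hence invertible, with $\sigma_{\min}(\mb U^* \mb U) > 1/2$. The norm bound then follows immediately from $\norm{\mb U}{\ell^2 \to \ell^2}^2 = \norm{\mb U^* \mb U}{\ell^2 \to \ell^2} = \lambda_{\max}(\mb I + \mb \Delta) \le 1 + \norm{\mb \Delta}{\ell^2 \to \ell^2} < 3/2$.

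For the final estimate I would factor $\mb U$ out on the left, writing
\begin{equation*}
\mb U - \mb U (\mb U^* \mb U)^{-1/2} = \mb U \left( \mb I - (\mb U^* \mb U)^{-1/2} \right),
\end{equation*}
so that submultiplicativity gives $\norm{\mb U - \mb U(\mb U^* \mb U)^{-1/2}}{\ell^2 \to \ell^2} \le \norm{\mb U}{\ell^2 \to \ell^2}\,\norm{\mb I - (\mb I + \mb \Delta)^{-1/2}}{\ell^2 \to \ell^2}$. The second factor is precisely a square-root perturbation around the identity: applying Lemma \ref{lem:mtx-sqrt-perturb} with $\mb A = \mb I$, whose hypothesis $\norm{\mb \Delta}{\ell^2 \to \ell^2} \le \sigma_{\min}(\mb I)/2 = 1/2$ is satisfied, bounds it by $2\norm{\mb \Delta}{\ell^2 \to \ell^2}$. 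Combining this with $\norm{\mb U}{\ell^2 \to \ell^2} < \sqrt{3/2}$ yields $\norm{\mb U - \mb U(\mb U^* \mb U)^{-1/2}}{\ell^2 \to \ell^2} < 2\sqrt{3/2}\,\norm{\mb \Delta}{\ell^2 \to \ell^2} = \sqrt{6}\,\norm{\mb \Delta}{\ell^2 \to \ell^2} < 3\norm{\mb \Delta}{\ell^2 \to \ell^2}$, the last step using $\sqrt{6} \approx 2.449 < 3$.

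There is no real obstacle here; the argument is a routine perturbation estimate. The only points that require care are checking that the hypothesis of Lemma \ref{lem:mtx-sqrt-perturb} is met with $\mb A = \mb I$ (so $\sigma_{\min}(\mb A) = \norm{\mb A}{\ell^2 \to \ell^2} = 1$) and tracking the numerical constant, where the slack $\sqrt{6} < 3$ is what makes the stated bound hold with room to spare.
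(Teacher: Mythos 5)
Your proposal is correct and follows essentially the same route as the paper's proof: the norm bound via $\norm{\mb U}{\ell^2\to\ell^2}^2 = \norm{\mb I + \mb \Delta}{\ell^2\to\ell^2} < 3/2$, the factorization $\mb U(\mb I - (\mb U^*\mb U)^{-1/2})$, an application of Lemma \ref{lem:mtx-sqrt-perturb} with $\mb A = \mb I$, and the final constant $2\sqrt{3/2} < 3$. The only difference is that you spell out the invertibility via the eigenvalue interval $(1/2, 3/2)$, which the paper leaves implicit.
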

\begin{proof}
Upper bound for the first quantity can be derived 
\begin{eqnarray}
\norm{\mb U }{\ell^2 \to \ell^2} &=& \sqrt{ \norm{\mb U^* \mb U}{\ell^2 \to \ell^2} } \\
 &\le& \sqrt{ \norm{\mb I}{\ell^2 \to \ell^2} + \norm{\mb \Delta }{\ell^2 \to \ell^2} } \\
 &<& \sqrt{3/2}. 
 \end{eqnarray}
Applying Lemma \ref{lem:mtx-sqrt-perturb} for the second term
\begin{eqnarray}
\norm{\mb U - \mb U (\mb U^* \mb U)^{-1/2} }{\ell^2 \to \ell^2} &\le& \norm{\mb U}{\ell^2\to\ell^2} \norm{\mb I - (\mb U^* \mb U)^{-1/2} }{\ell^2 \to \ell^2} \\
&=& \norm{\mb U }{\ell^2 \to \ell^2 } \norm{\mb I^{-1/2} - ( \mb I + \mb \Delta )^{-1/2} }{\ell^2 \to \ell^2 } \\
&\le& \sqrt{ 3/2 }  \times 2 \norm{\mb \Delta }{\ell^2 \to \ell^2},
\end{eqnarray}
Hence, we can obtain the claim by using $2 \sqrt{ 3/2 } < 3$ to simplify the constant.

\end{proof}

\end{document}